\title{IsoScore: Measuring the Uniformity of Embedding Space Utilization}
\author{William Rudman${}^\dagger$, 
Nate Gillman${}^\ddagger$,
Taylor Rayne${}^*$,
Carsten Eickhoff${}^\dagger$ \\
Department of Computer Science, Brown University${}^\dagger$\\ 
Department of Mathematics, Brown University${}^\ddagger$\\ 
Quest University${}^*$\\
\texttt{\{william\_rudman, ngillman, carsten\}@brown.edu}\\
\texttt{taylor.rayne@questu.ca}
}
\definecolor{green}{RGB}{10, 115, 10}
\definecolor{blue}{RGB}{100, 150, 240}
\definecolor{coral}{RGB}{255, 102, 102}
\definecolor{thistle}{RGB}{190,151,190}
\definecolor{green1}{RGB}{0, 148, 0}
\definecolor{red1}{RGB}{180, 0, 0}
\newcommand{\green}[1]{{\color{green1}{{\bf  }}{\em #1}{\bf }}}
\newcommand{\red}[1]{{\color{red1}{{\bf  }}{\em #1}{\bf }}}
\newtheorem{definition}{Definition}
\numberwithin{definition}{section}
\newtheorem{prop}[definition]{Proposition}
\newtheorem{heuristic}[definition]{Heuristic}
\newtheorem*{assumptionUnderpinningHeuristic}{Assumption Underpinning The Heuristic}
\newcommand{\cmark}{\ding{51}}%
\newcommand{\xmark}{\ding{55}}%
\newenvironment{psmallmatrix}
  {\left(\begin{smallmatrix}}
  {\end{smallmatrix}\right)}
\DeclareMathOperator{\Diag}{Diag}
\DeclareMathOperator{\diag}{diag}
\numberwithin{equation}{section}
\begin{document}
\maketitle
\begin{abstract}
The recent success of distributed word representations has led to an increased interest in analyzing the properties of their spatial distribution. Several studies have suggested that contextualized word embedding models do not isotropically project tokens into vector space. However, current methods designed to measure isotropy, such as average random cosine similarity and the partition score, have not been thoroughly analyzed and are not appropriate for measuring isotropy.
We propose IsoScore: a novel tool that quantifies the degree to which a point cloud uniformly utilizes the ambient vector space. Using rigorously designed tests, we demonstrate that IsoScore is the only tool available in the literature that accurately measures how uniformly distributed variance is across dimensions in vector space.  Additionally, we use IsoScore to challenge a number of recent conclusions in the NLP literature that have been derived using brittle metrics of isotropy. 
We caution future studies from using existing tools to measure isotropy in contextualized embedding space as resulting conclusions will be misleading or altogether inaccurate.
\end{abstract}

\section{Introduction \& Background}\label{sec:intro}

The first step in any natural language processing pipeline is to represent text in a vector space. Understanding how contextualized word embedding models project tokens into vector space is crucial for advancing the field of natural language processing. Several recent studies analyzing the spatial distribution of contextualized word embeddings claim that the point clouds induced by models such as BERT or GPT-2 do not uniformly utilize all dimensions of the vector space they occupy \citep{bert_context, bert_mean, cai_isotropy_context, coenen_bert_geometry, gao2019representation}.

Figure~\ref{fig:dim_used_pics} illustrates a two-dimensional disk that uniformly utilizes the $x$ and $y$ axes in two-dimensional space, but does not uniformly utilize all dimensions when embedded into three dimensions.

 \begin{figure}[h!]
     \centering
     \includegraphics[width=2cm]{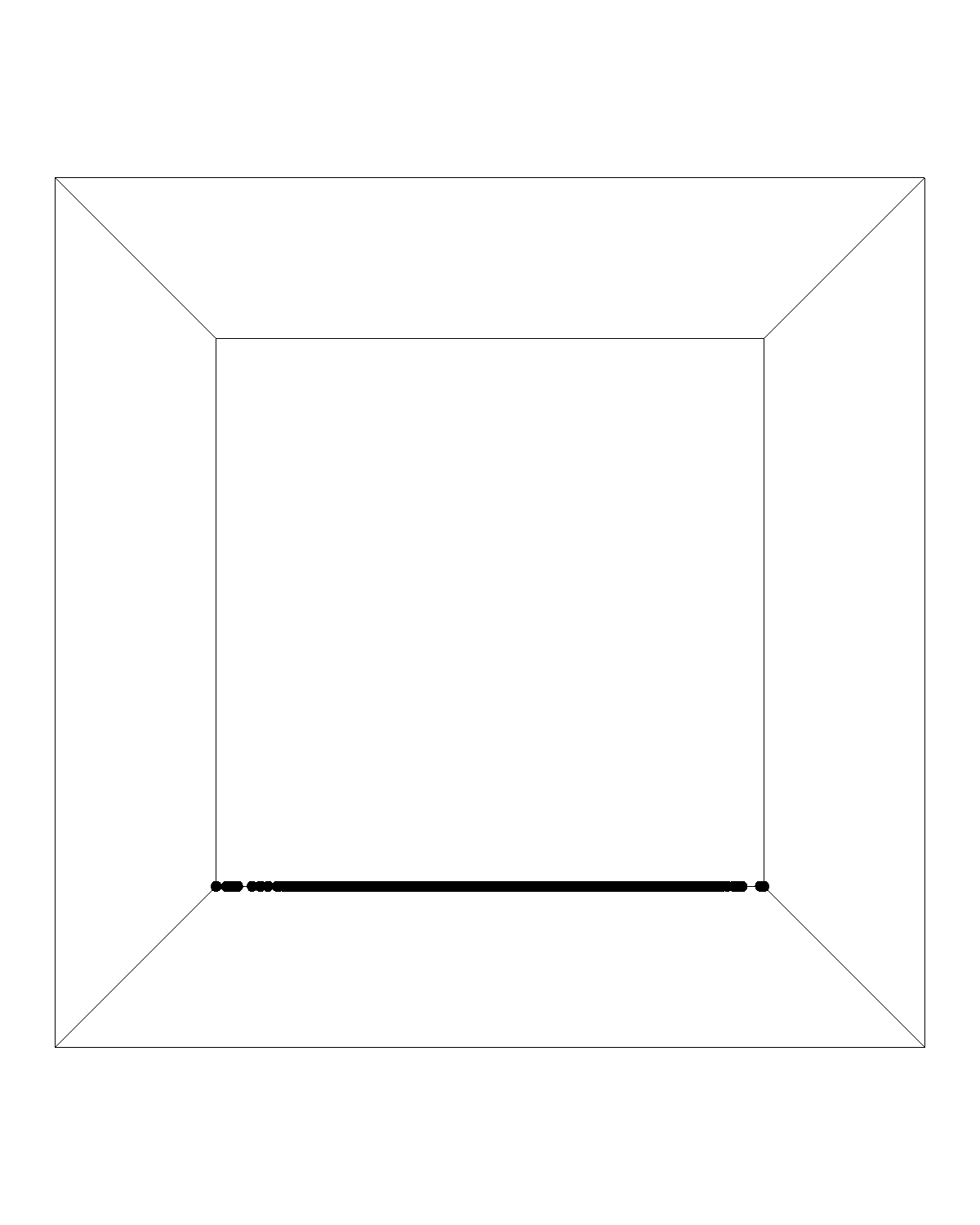}
     \hspace{0.5cm}
     \includegraphics[width=2cm]{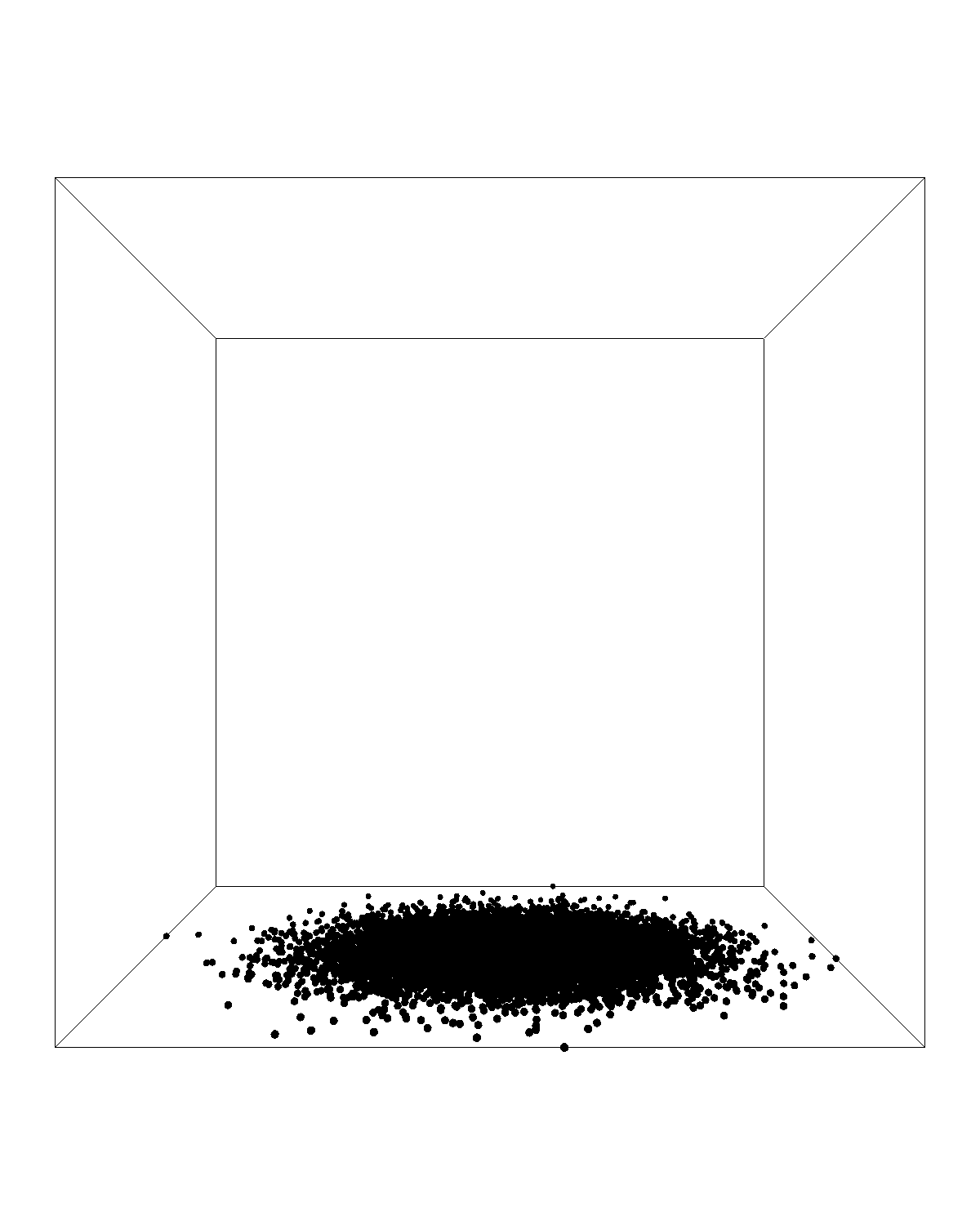}
     \hspace{0.5cm}
     \includegraphics[width=2cm]{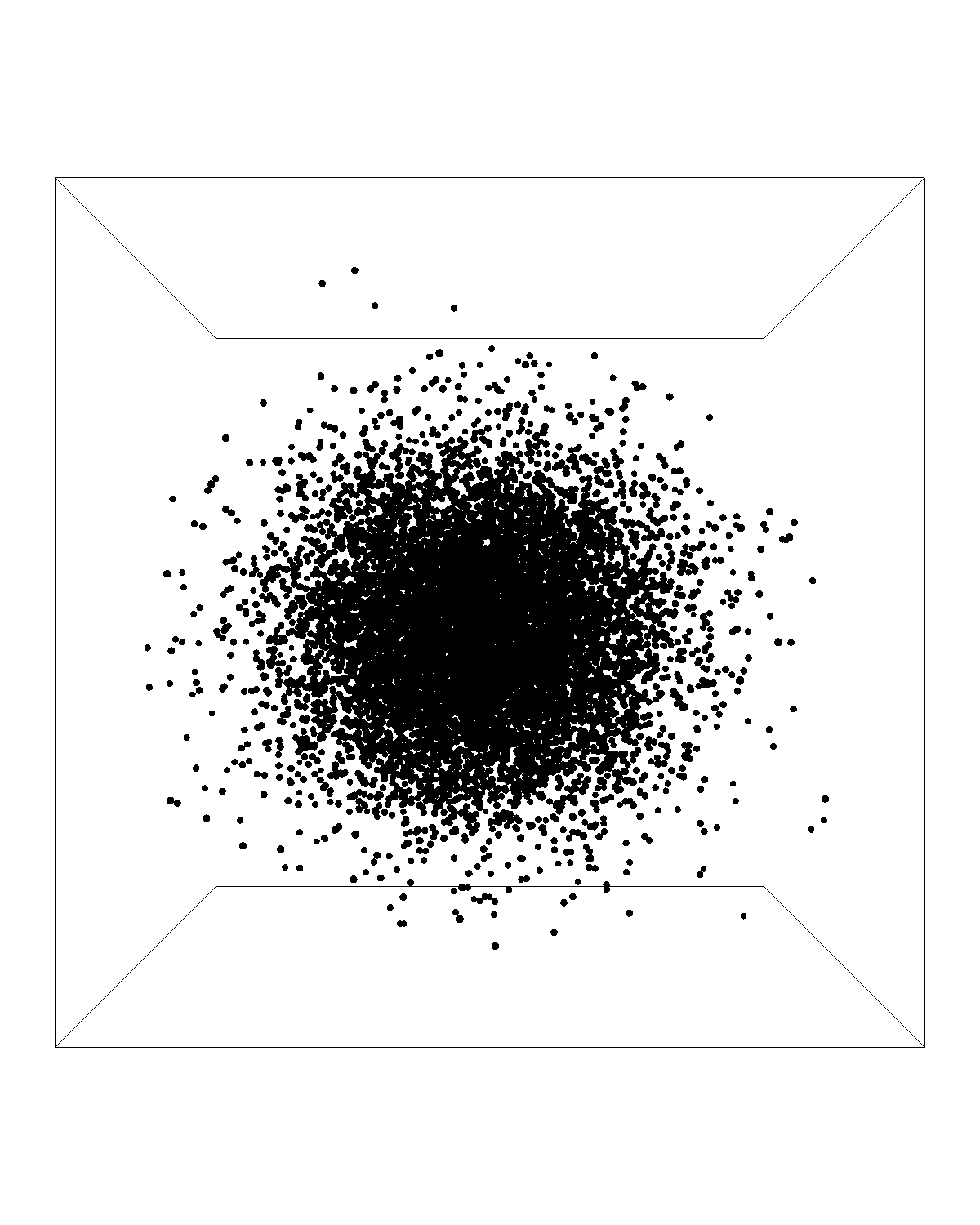}
     \caption{From left to right, a line, disk, and ball embedded in 3D space.     }
     \label{fig:dim_used_pics}
 \end{figure}

A distribution is  \textit{isotropic} when variance is uniformly distributed across all dimensions. Namely, a distribution is fully isotropic when the covariance matrix is proportional to the identity matrix. Several authors suggest that isotropy correlates with improved performance of embedding models \citep{naacl_isotropy, Wang2019ImprovingNL, Coenen2019VisualizingAM,Gong2018FRAGEFW,Hasan2017WordRV, Hewitt2019ASP, Liang, Zhou2019GettingIS,Zhou2021IsoBNFB}. However, current methods of measuring the spatial utilization of contextualized embedding models do not truly measure isotropy. The most commonly used methods for measuring spatial distribution in embedding spaces include average random cosine similarity, the partition score, variance explained and intrinsic dimensionality estimation. 
In Section \ref{sec:experiments} we argue that all current methods of measuring isotropy have fundamental shortcomings that render them inadequate measures of spatial distribution.

To overcome these limitations, we introduce \textit{IsoScore}: a novel tool for measuring the extent to which the variance of a point cloud is uniformly distributed across all dimensions in vector space. 
In contrast to previous attempts of measuring isotropy, IsoScore is the first score that incorporates the \textit{mathematical definition} of isotropy into its formulation. As a result, IsoScore has the following desirable properties that surpass the capabilities of existing metrics: (i) It is a global measure of how uniformly distributed points are in vector space that is robust to changes in the distribution mean and scalar changes in covariance; (ii) It is rotation invariant; (iii) It increases linearly as more dimensions are utilized; and (iv) It is not skewed by highly isotropic subspaces within the data. This paper makes the following novel contributions.
\begin{enumerate}
    \item This paper outlines essential conditions for measuring isotropy and uses a testing suite to empirically verify if a given method meets these conditions.
    \item We highlight fundamental shortcomings of state-of-the-art tools and demonstrate that none of the existing methods accurately measure isotropy.
    \item We present IsoScore, the first rigorously defined method for measuring isotropy in point clouds of data. 
    \item We share an efficient Python implementation of IsoScore with the community.\footnote{\href{https://github.com/bcbi-edu/p\_eickhoff\_isoscore}{\textit{https://github.com/bcbi-edu/p\_eickhoff\_isoscore}}. Alternatively:\ \texttt{pip install IsoScore}.} 
\end{enumerate}

The remainder of this paper is structured as follows: Section~\ref{sec:related} reviews previous works attempting to study isotropy in contextualized word embeddings. Section~\ref{sec:embedding_space_utilization} formally defines isotropy and describes existing tools in detail. The formal definition of IsoScore is presented in Section~\ref{sec:formal_defn} and in  Section~\ref{sec:experiments}, we report empirical results from experiments on contextualized word embeddings. Finally, Section~\ref{sec:conclusion} concludes with an outlook on future directions of work.

\section{Related Work} \label{sec:related}
\subsection{Word Embeddings}
In recent years, there has been an increased interest in analyzing the spatial organization of point clouds induced by word embeddings \citep{naacl_isotropy, bert_mean, bert_context, coenen_bert_geometry, cai_isotropy_context, vec_postprocess_mu, Liang}. Several studies have concluded that contextualized embeddings form highly anisotropic, ``narrow cones'' in vector space \citep{bert_context,cai_isotropy_context, gao2019representation, Gong2018FRAGEFW}. The most prevalent tools used to quantify the geometry of word embedding models calculate the average cosine similarity of a small number of randomly sampled pairs of points in embedding space. \citet{bert_context} claims that in some cases, contextualized embedding models have an average random cosine similarity that approaches $1.0$, meaning all points are oriented in the same direction in space irrespective of their syntactic or semantic function. 

\begin{figure}[h!]  
    \centering
    \includegraphics[width=3.7cm]{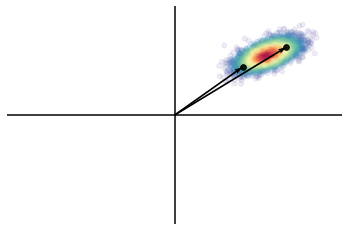}
    \includegraphics[width=3.7cm]{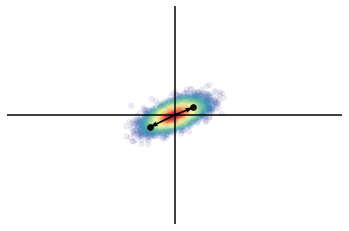}
    \caption{\emph{Left:} Point cloud $X \subset \mathbb{R}^{2}$. \emph{Right:} Result of applying a zero-mean transform to $X$.}
    \label{zero_mean_ex}
\end{figure}

In Section~\ref{sec:experiments}, we demonstrate that both average random cosine similarity and the partition score are significantly influenced by the mean of the data irrespective of how data points are distributed in vector space. Namely, if we normalize data to have zero-mean, average random cosine similarity and the partition score will artificially produce a score that reflects maximal isotropy. 
Figure \ref{zero_mean_ex} demonstrates that a applying a zero-mean transform to a point cloud increases the angle of randomly sampled points. Accordingly, the average random cosine of the left point cloud in Figure \ref{zero_mean_ex} approaches $1$ while the average random cosine similarity of the right point cloud approaches $0$. It is well known that word embedding models have non-zero mean vectors \citep{refine_ml_chu, Liang}. In the case of GPT-2 embeddings obtained from the WikiText-2 corpus \citep{wiki2}, we find values in the mean vector range from $-32.36$ to $198.19$. Although cosine similarity has long been used to capture the ``semantic'' differences between words in static embeddings, adapting any cosine similarity-based methods to measure isotropy obscures the true distribution of contextualized word embeddings.

\begin{table*}[h!]
\begin{center}
\resizebox{12.2cm}{!}{
\begin{tabular}{l|ccccc}
 \hline
 \hline 
 Test &  \textbf{IsoScore} &  \textbf{AvgRandCosSim} & \textbf{Partition} & \textbf{ID Score}  & \textbf{VarEx} \\ 
 \hline 
 \text{1. Mean Agnostic} & \green{\text{\cmark}} & \red{\text{\xmark}} & \red{\text{\xmark}} & \green{\text{\cmark}} & \green{\text{\cmark}}  \\
 \text{2. Scalar Covariance} & \green{\text{\cmark}} & \red{\text{\xmark}}  & \red{\text{\xmark}} & \green{\text{\cmark}} & \green{\text{\cmark}}  \\
\text{3. Maximum Variance} & \green{\text{\cmark}} & \red{\text{\xmark}} & \green{\text{\cmark}} & \red{\text{\xmark}} & \red{\text{\xmark}}\\
\text{4. Rotation Invariance} & \green{\text{\cmark}} & \green{\text{\cmark}}  & \red{\text{\xmark}}  & \green{\text{\cmark}} & \green{\text{\cmark}} \\
\text{5. Dimensions Used} & \green{\text{\cmark}} & \red{\text{\xmark}} & \red{\text{\xmark}}  & \red{\text{\xmark}} & \red{\text{\xmark}} \\
\text{6. Global Stability} & \green{\text{\cmark}} & \red{\text{\xmark}} & \green{\text{\cmark}}  & \green{\text{\cmark}} &\red{\text{\xmark}}\\
 \hline
 \hline 
\end{tabular}}
\caption{Performance of current methods for measuring spatial utilization.}
\label{table:test_scores}
\end{center}
\end{table*}

\subsection{Existing Methods}\label{subsec:existing_metrics}

We briefly review the most commonly used tools to measure the spatial distribution of point clouds $X\subseteq\mathbb R^n$.
A mathematical exposition of these tools can be found in Appendix \ref{app:existing_metrics}.

\textbf{Average Random Cosine Similarity:} We define the \textit{Average Random Cosine Similarity Score} as $1$ minus the average cosine similarity of $N=100,000$ randomly sampled pairs of points from $X$. \textbf{Note:} for ease of comparison to other methods, we calculate 1 minus the absolute value of the average random cosine similarity so that 0 would indicate minimal isotropy and 1 would indicate maximal isotropy. We demonstrate in Section ~\ref{sec:experiments} that average random cosine similarity is not a measure of isotropy.

\noindent \textbf{Partition Isotropy Score:} \citet{vec_postprocess_mu} define this score to be a particular quotient involving the partition function first proposed by \citet{arora}: $Z(c) := \sum_{x \in X} \text{exp}(c^{\text{T}} x)$, where $c$ is carefully chosen from the eigenspectrum of $XX^{\text{T}}$. 
It is believed that a score closer to $0$ indicates an anisotropic space, while a score near $1$ indicates an isotropic space. We refer to this as the \textit{Partition Score.}

\textbf{Intrinsic Dimensionality:} 
Algorithms for estimating intrinsic dimensionality aim to compute the true dimension of a given manifold from which we assume a point cloud has been sampled.
Intrinsic dimensionality has been used to argue word embedding models are anisotropic \citep{cai_isotropy_context}.
We use the MLE method to calculate intrinsic dimensionality \citep{mle-id}.
Dividing the intrinsic dimensionality of $X\subseteq\mathbb R^n$ by $n$ provides us with a normalized score of isotropy, which we refer to as the \textit{ID Score}. 

\textbf{Variance Explained Ratio:} The variance explained ratio, which we refer to as the \textit{VarEx Score}, measures how much total variance is explained by the first $k$ principal components of the data. 
We compute this by dividing the variance explained by the first $k$ principal components by $k/n$. 
The VarEx Score requires us to specify \textit{a priori} the number of principal components we wish to examine, which makes comparisons between vector spaces with different dimensions difficult and results in undesirable behavior, particularly when the dimension of the vector space is large.

Section~\ref{sec:experiments} demonstrates that all existing methods have fundamental shortcomings that make them unreliable measures of spatial distribution. Using any of the above existing tools to make claims about isotropy will be misleading as none of the described methods truly measure isotropy.

\section{Measuring Embedding Space Utilization} \label{sec:embedding_space_utilization}

\subsection{Definition of Isotropy}
\label{subsec:def_isotropy}
A distribution is \textit{isotropic} if its variance is uniformly distributed across all dimensions. Namely, the covariance matrix of an isotropic distribution is proportional to the identity matrix. 
Conversely, an \textit{anisotropic} distribution of data is one where the variance is dominated by a single dimension. 
For example, a line in $n$-dimensional vector space is maximally anisotropic. Robust isotropy metrics should return maximally isotropic scores for balls and minimally isotropic (i.e.\ anisotropic) scores for lines. 
Appendix~\ref{app:interpretation} provides a geometric interpretation of ``medium isotropy''.
We interpret a medium isotropic space in $\mathbb{R}^{n}$ to be one where the data uniformly utilizes approximately $n/2$ dimensions in space as defined below. Note that we exclude two edge cases for measuring isotropy. Firstly, since isotropy is a property of the covariance matrix of a distribution, the dimensionality of the space needs to be greater than 1. Secondly, we do not consider the extreme case where the data consists of a single point.

\subsection{Dimensions utilized} \label{subsec:dim_used}
Given a point cloud $X\subseteq\mathbb R^n$, we measure how many dimensions of $\mathbb R^n$ are truly utilized by $X$.
For example, we denote by $I_{n}^{(k)}$ the $n \times n$ covariance matrix where  $a_{i,i} = 1$ for $i \in \{1,2,...,k\}$ and all other elements are $0$. 
Note that when $k=n$, we recover the identity matrix. Thus, $I_{n}^{(k)}$ represents a covariance matrix where the first $k$ dimensions are being uniformly utilized.
Figure~\ref{fig:dim_used_pics} illustrates point clouds in $\mathbb R^3$ that have covariance matrix $I_3^{(1)}$, $I_3^{(2)}$, and $I_3^{(3)}$.
These utilize 1, 2, and 3 dimensions in $\mathbb R^3$.
To make this discussion rigorous and general, we make the following definition:

\begin{table*}[h!]
\begin{center}
\begin{tabular}{lllllllll}
 \hline
 \hline 
   $\iota(I_{9}^{(1)})$  &$\iota(I_{9}^{(2)})$ &$\iota(I_{9}^{(3)})$  &$\iota(I_{9}^{(4)})$  &$\iota(I_{9}^{(5)})$  &$\iota(I_{9}^{(6)})$ &$\iota(I_{9}^{(7)})$  &$\iota(I_{9}^{(8)})$ &$\iota(I_{9}^{(9)})$\\
 \hline 
 0.000 & 0.125 & 0.250 & 0.375 & 0.500 & 0.625 & 0.750 & 0.875 & 1.000\\
  \hline
 \hline 
\end{tabular}
\caption{Linearly increasing dimensions utilized in $\mathbb R^9$ linearly increases IsoScore. We prove in Appendix~\ref{app:interpretation} that IsoScore satisfies the formula $\iota(I_n^{(k)})=\frac{k-1}{n-1}$.} 
\label{table:linearly_increase_dims}
\end{center}
\end{table*}

\begin{definition}\label{def:uniform_utilization}
Consider a point cloud $X\subseteq\mathbb R^n$.
Let $\Sigma$ be the covariance matrix of $X$ and assume all the off-diagonal entries of $\Sigma$ are zero.
Let $\Sigma_D\in\mathbb R^n$ denote the diagonal of $\Sigma$.
\begin{enumerate}
    \item We say $X$ \emph{utilizes $k$ dimensions} in $\mathbb R^n$ if the  first $k$ entries of $\Sigma_D$ are non-zero and the remaining $n-k$ entries are zero.
    \item We say $X$ \emph{uniformly utilizes $k$ dimensions} in $\mathbb R^n$ if $X$ utilizes $k$ dimensions in $\mathbb R^n$ and if all the non-zero entries in $\Sigma_D$ are equal.
\end{enumerate}
\end{definition}

Having a diagonal sample covariance matrix $\Sigma$ implies there are no correlations between any coordinates of $X$. 
In Section~\ref{sec:formal_defn}, we reduce the general case of $X$ to the case where the covariance matrix of $X$ is diagonal. Figure~\ref{fig:2d_max_var} illustrates three point clouds in $\mathbb R^2$ that each utilize $2$ dimensions. We argue that it is of practical importance to differentiate between the cases in Figure~\ref{fig:2d_max_var}.
The leftmost panel uniformly utilizes all dimensions of $\mathbb R^2$, while the rightmost panel does not uniformly utilize two dimensions of space. Note that average random cosine similarity returns maximal isotropy scores for each point cloud pictured in Figure~\ref{fig:2d_max_var}.

Our proposed IsoScore reflects the dimensions utilized by a point cloud in a linear fashion.
See Table~\ref{table:linearly_increase_dims} for a concrete example of how IsoScore reflects dimensions utilized in $\mathbb R^9$.

\begin{figure}[h]
\begin{center} 
    \includegraphics[width=7.4cm]{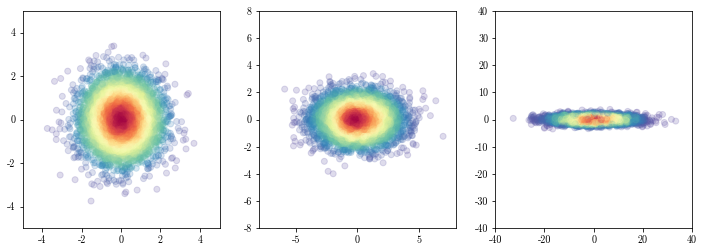}
\end{center}
    \caption{Points sampled from a 0 mean, 2D Gaussian with covariance $\begin{psmallmatrix} x & 0 \\ 0 & 1 \end{psmallmatrix}$ where $x = 1, 3, 75$.}
    \label{fig:2d_max_var} 
    \end{figure}

\subsection{Essential Properties of Isotropy}\label{sec:axioms}
\vspace{-0.05cm}
We now outline the essential properties that a measure of isotropy must possess. 

\textbf{1: Mean Agnostic.} Recall that a distribution is isotropic if variance is uniform across all dimensions. It is essential to note that \textit{isotropy is strictly a property of the covariance matrix} of a distribution. If changes to the mean of a distribution influence an isotropy score, then the given score does not measure isotropy.

\textbf{2: Scalar Changes to the Covariance Matrix.} Since isotropy is defined as the \textit{uniformity} of variance across all dimensions, isotropy scores should not change when we multiply the covariance matrix of the underlying distribution of the data by a positive scalar value. If the covariance matrix of a distribution of data is equal to $\lambda \cdot I_{n}$ where $\lambda > 0$ is some scalar value and $I_{n}$ is the $n \times n$ identity matrix, then a tool must return an isotropy score approaching $1$.

\textbf{3: Maximum Variance.} 
As we increase the difference between the maximum variance value in our covariance matrix and the average variance value of the remaining dimensions, isotropy scores should monotonically decrease to zero. 
 Figure~\ref{fig:2d_max_var} illustrates the effect of increasing the difference between the average variance value and the maximum value in the covariance matrix. 
 Increasing the difference between the maximum variance value and the average variance value increases the amount of variance explained by the first principal component of the data.
 Namely, larger maximum variance values reduce the efficiency of spatial utilization.

\textbf{4: Rotation Invariance.} 
Given a point cloud $X \subset \mathbb{R}^{n}$, an ideal measure of spatial utilization should remain constant under rotations of $X$ since the distribution of principal components remains constant under rotation. Accordingly, we consider the canonical distribution of the variance of $X$ to be the variance after projecting $X$ using principal component analysis.
Figure~\ref{fig:gauss_rotations} illustrates the process of PCA-reorientation.

\textbf{5: Dimensions Used.} 
 As described in Subsection~\ref{subsec:dim_used}, there is a direct link between isotropy and the number of dimensions utilized by the data. 
 Intuitively, increasing the number of dimensions uniformly utilized by the data expands the number of principal components it takes to explain all of the variance in the data. 
 Accordingly, a good score of spatial utilization should increase linearly as we increase the number of dimensions uniformly utilized by the data. 
 Figure~\ref{fig:dim_used_pics} depicts data utilizing one, two, and three out of three ambient dimensions, respectively.
 
   \begin{figure}[h!]  
    \centering
    \includegraphics[width=3cm]{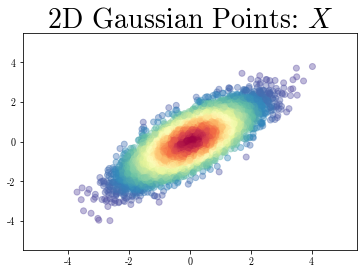}
    \hspace{0.1cm}
    \includegraphics[width=3cm]{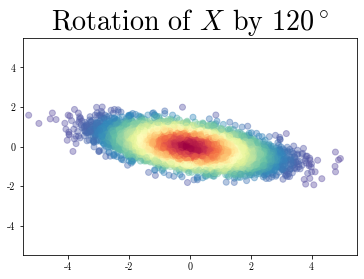}
     \hspace{0.1cm}
    \includegraphics[width=3cm]{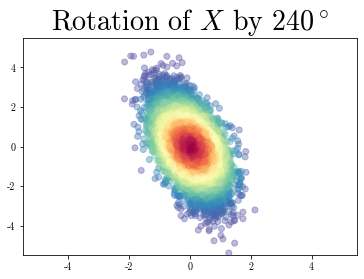}
     \hspace{0.1cm}
    \includegraphics[width=3cm]{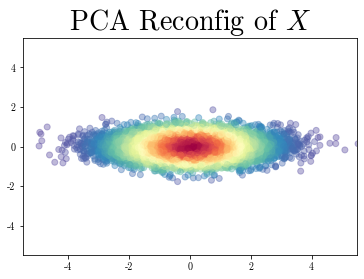}
    \caption{Left: 2D zero-mean Gaussian with covariance $\begin{psmallmatrix}
    1 & 0.8 \\ 0.8 & 1
    \end{psmallmatrix}$. We rotate $X$ by $120^{\circ}$ and $240^{\circ}$, respectively. Right: Points after applying PCA reorientation.}
    \label{fig:gauss_rotations}
    \end{figure}

\vspace{-0.1cm}
 
\textbf{6: Global stability.}
  A metric of efficient spatial utilization should be a \textit{global} reflection of the distribution. 
  A robust method should be stable even when the data exhibits small subpopulations where a score would return an extreme value. 
  
\begin{algorithm*}[h]
\caption{IsoScore}\label{algo:isoscore}
\begin{algorithmic}[1]
\Begin \ Let $X \subset \mathbb{R}^{n}$ be a finite collection of points. 
\State Let $X^{\mathrm{PCA}}$ denote the points in $X$ transformed by the first $n$ principal components.
\State Define $\Sigma_{D}\in\mathbb R^n$ as the diagonal of the covariance matrix of $X^{\mathrm{PCA}}$. 
\State Normalize diagonal to 
$\hat\Sigma_D:=\sqrt{n}\cdot\Sigma_D/\|\Sigma_D\|$, where $\|\cdot\|$ is the standard Euclidean norm.
\State The isotropy defect is $\delta(X):=\|\hat\Sigma_{D} - \mathbf 1\|/\sqrt{2(n - \sqrt{n})}$, where $\mathbf 1=(1,\dots,1)^\top\in\mathbb R^n$.
\State $X$ uniformly occupies $\phi(X):=(n-\delta(X)^2(n-\sqrt n))^2/n^2$ percent of ambient dimensions.
\State Transform $\phi(X)$ so it can take values in $[0,1]$, via $\iota(X):=(n\cdot\phi(X)-1)/(n-1)$.
\State \textbf{return:} $\iota(X)$
\End 
\end{algorithmic}
\end{algorithm*}

  We test this by computing IsoScore for the union of a noisy sphere and a line; we provide a geometric rendering of this in Figure \ref{fig:meatball} in Appendix \ref{appendix:num_experiments}. We refer to this test as the ``skewered meatball'' test. A good score of spatial distribution for a ``skewered meatball'' should reflect the ratio of the number of points sampled from the line and the number of points sampled from the sphere.
  
 \begin{figure}[h!]
        \centering
        \includegraphics[width=2.5cm]{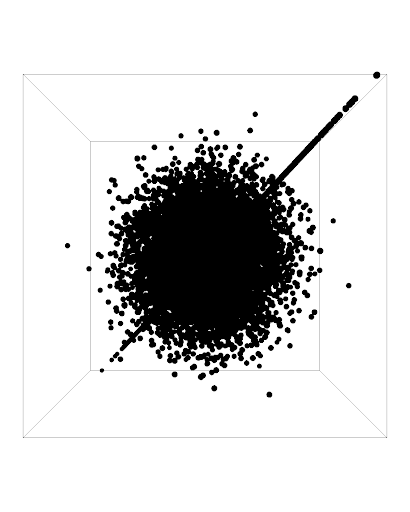}
        \caption{2D rendering of a line in 3D space intersecting  noisy sphere. AKA ``skewered meatball.''}
        \label{fig:meatball}
\end{figure}
  
 In Table~\ref{table:test_scores}, we list which existing methods satisfy which essential conditions. Section~\ref{sec:experiments} outlines the numerical experiments we execute to obtain this table. As each of the above properties have been derived from the mathematical definition of isotropy, an accurate tool for measuring isotropy needs to satisfy each essential condition.
 
 \vspace{-0.075cm}

\section{IsoScore}\label{sec:formal_defn}
\vspace{-0.05cm}

This section introduces the proposed IsoScore metric of uniform spatial utilization.

\subsection{Formal Definition of IsoScore}\label{subsec:isoscore_defn}

 Algorithm~\ref{algo:isoscore} gives a high-level overview of the procedure. 
Afterwards, we discuss the individual steps in detail.

\textbf{Step 1: Start with a point cloud $X\subseteq\mathbb R^n$}. IsoScore takes as input a finite subset of $\mathbb R^n$ and outputs a number in the interval $[0,1]$ that represents the extent to which $X$ is isotropic.

\textbf{Step 2: PCA-reorientation of data set.} Execute PCA on $X$, where the target dimension remains the original $n$. Performing PCA reorients the axes of $X$ so that the $i$'th coordinate accounts for the $i$'th greatest variance. Further, it eliminates all correlation between dimensions making the covariance matrix diagonal. We denote the transformed space as $X^{\mathrm{PCA}}$.

\textbf{Step 3: Compute variance vector of reoriented data.} Compute the $n\times n$ covariance matrix of $X^{\mathrm{PCA}}$; denote this matrix by $\Sigma$. Let  $\Sigma_D$ denote the diagonal of the covariance matrix.
    We refer to  $\Sigma_D$ as the \emph{variance vector,} and we identify $\Sigma_D$ as a vector in $\mathbb R^{n}$. 
    Performing Step 2 causes all off-diagonal entries of the covariance matrix of $X_T$ to vanish, which allows us to ignore off-diagonal elements for the rest of the computation.

\textbf{Step 4: Length normalization of variance vector.} We define the \emph{normalized variance vector} to be
    \[\hat\Sigma_D:=\sqrt{n}\cdot \frac{\Sigma_D}{\|\Sigma_D\|},\]
    where $\|(x_1,...,x_n)\|:=\sqrt{x_1^2+\cdots+x_n^2}$ denotes the standard Euclidean norm on $\mathbb R^n$.
    Note that as a result of this normalization, we have $\|\hat\Sigma_D\|=\sqrt n$.
    
\textbf{Step 5: Compute the distance between the covariance matrix and identity matrix.}
    Denote the diagonal of the $n\times n$ identity matrix by $\mathbf 1\in\mathbb R^{n}$.
    Then we define the \emph{isotropy defect} of $X$ to be
    \[\delta(X)
        :=\frac{\|\hat\Sigma_D-\mathbf 1\|}{\sqrt{2(n-\sqrt n)}}.\]
    By definition of the Euclidean norm, we have  $\|\hat\Sigma_D\|=\|\mathbf 1\|=\sqrt{n}$. It follows from the triangle inequality that $\|\hat\Sigma_D-\mathbf 1\|\in[0,2\sqrt n]$.
    Crucially, we prove in Appendix~\ref{appendix:Bounds} that achieving a value of $2\sqrt n$ using a valid covariance matrix is impossible. The largest value that can be attained is with the matrix $(a_{ij})_{i,j=1,\dots,n}$ defined by $a_{11}=\sqrt n$ and $a_{ii}=0$ whenever $i>1$.
    One can compute that the Euclidean norm in this case is $\|\hat\Sigma_D-\mathbf 1\|=\sqrt{2(n-\sqrt n)}$. Choosing this normalization factor guarantees that $\delta(X)\in [0,1]$, where $0$ represents a perfectly isotropic space and $1$ represents a perfectly anisotropic space.
    
\textbf{Step 6: Use the isotropy defect to compute percentage of dimensions isotropically utilized.} We argue in Heuristic~\ref{heuristic} that if $X$ has isotropy defect $\delta(X)$, then $X$ isotropically occupies approximately $k(X)=(n-\delta(X)^2(n-\sqrt n))^2/n$ dimensions in $\mathbb R^n$.
    Because $\delta(X)\in[0,1]$, one can estimate that $k(X)\in[1,n]$ so the fraction of dimensions utilized is $\phi(X):=k(X)/n\in[1/n,1]$.

\begin{figure*}[h!]
\centering
\includegraphics[width=13cm]{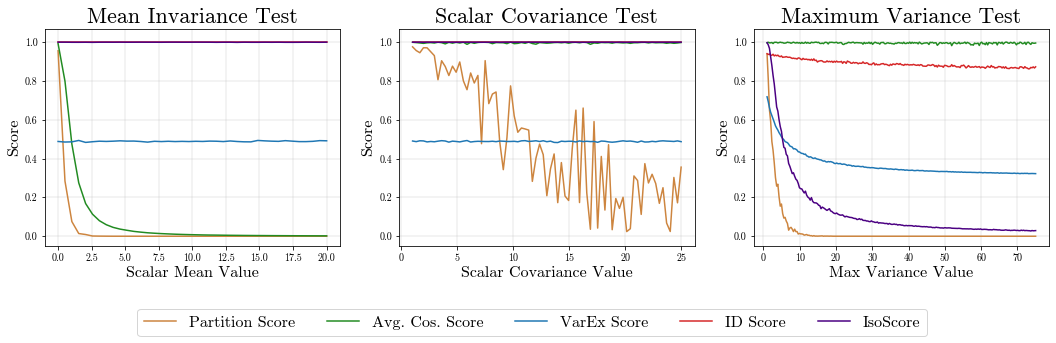}
\caption{Left: Scores of points sampled from a 10-dimensional Gaussian with identity covariance and common mean vector ranging from 0 to 20. Center: Scores for the scalar covariance test for a 5-dimensional, zero-mean Gaussian. Right: Scores for the Maximum Variance test for 10-dimensional, zero-mean Gaussians.}
    \label{fig:first_3_tests}
\end{figure*}

\textbf{Step 7: Linearly scale percentage of dimensions utilized to obtain IsoScore.}
    The fraction of dimensions utilized, $\phi(X)$, is close to the final IsoScore, but it falls within the interval $[1/n,1]$. 
    As we want the possible range of scores to fill the interval $[0,1]$, we apply the affine function that maps $1/n\mapsto 0$ and $1\mapsto 1$. Thus, $S:[1/n,1]\to[0,1]:x\mapsto(nx-1)/(n-1)$.
    Once we compose these transformations, we obtain IsoScore:
    \begin{equation}\label{eq:IsoScore_formula}
    \iota(X):=\frac{(n-\delta(X)^2(n-\sqrt n))^2-n}{n(n-1)}.
    \end{equation}

\subsection{Geometric Interpretation for IsoScore}\label{subsec:interpretation}

In Subsection~\ref{subsec:isoscore_defn} we described how to compute an IsoScore $\iota(X)$ for any point cloud $X\subseteq\mathbb R^n$.
We will now present a heuristic interpretation for a given IsoScore.
Intuitively, our heuristic says that $\iota(X)$ is roughly the fraction of dimensions of $\mathbb R^n$ utilized by $X$.
More precisely, the quantity of dimensions of $\mathbb R^n$ utilized by $X$ is some number inside the interval $[\iota(X)n,\iota(X)n+1]\cap[1,n]$.
We formalize this below.

\begin{heuristic}\label{heuristic2}
When the ambient space $\mathbb R^n$ has large dimension, the IsoScore $\iota(X)$ is approximately the fraction of dimensions uniformly utilized by $X$.
\end{heuristic}

We prove this heuristic in Appendix~\ref{app:interpretation}.
Note in particular that $\iota(X)=0$ implies that~\ref{eq:IsoScore_to_dimensions_used} simplifies to a single dimension utilized and $\iota(X)=1$ implies that~\ref{eq:IsoScore_to_dimensions_used} simplifies to all $n$ dimensions utilized.

Because IsoScore covers a continuous spectrum, one should carefully interpret what we mean when we say that $X$ occupies approximately $k$ dimensions of $\mathbb R^n$.
For example, consider the 2D Gaussian distributions depicted in Figure~\ref{fig:2d_max_var}. 
Heuristic~\ref{heuristic} predicts $k=1.9996,1.6105,1.0281$ dimensions are used when $x=1,3,75$, respectively.
These should be interpreted as follows: ``when $x=75$, the points sampled are mostly using one direction of space'' and ``when $x=3$, the points sampled are using somewhere between one and two dimensions of space.''

\section{Experiments} \label{sec:experiments}
In Subsection~\ref{subsec:testing_axioms}, we present results from numerical experiments designed to test each of the isotropy scores presented in this paper against the six essential properties outlined in Section~\ref{sec:axioms}. Exact descriptions of the numerical experiments are provided in Appendix~\ref{appendix:num_experiments}. We reiterate that each of the essential conditions have been derived directly from the mathematical definition of isotropy and violating any of the essential properties disqualifies a method from being a correct measure of isotropy. 

In Subsection~\ref{subsec:context_embeddings}, we demonstrate the merit of IsoScore by recreating the experimental setup presented in \citep{cai_isotropy_context}. We create word embeddings for tokens from the WikiText-2 corpus using GPT \citep{gpt}, GPT-2 \citep{gpt2}, BERT \citep{bert} and DistilBERT \citep{distil-bert} and calculate isotropy scores for each layer of the model. 

\subsection{Testing methods against the essential properties}\label{subsec:testing_axioms}

\textbf{Test 1: Mean Agnostic.} 
 When the covariance matrix of a distribution is proportional to the identity matrix, measures of isotropy should return a score of $1$ regardless of the value of the mean. Figure~\ref{fig:first_3_tests} demonstrates that neither average random cosine similarity nor the partition score are mean-agnostic.
IsoScore is mean-agnostic since it is a function of the covariance matrix. 
Importantly average random cosine similarity and the partition score are skewed by non-zero mean data. Our results show that, for an isotropic Gaussian with covariance matrix $\lambda \cdot I_{n}$ and mean vector $M = [\mu, \mu, ..., \mu]$, the average random cosine similarity of points sampled from this distribution will approach 0 as we increase the ratio between $\mu/\lambda$. \textit{Consequently, zero-centering data will cause average random cosine similarity to return maximally isotropic scores without impacting the distribution of the variance.} 

\textbf{Test 2: Scalar Changes to the Covariance Matrix.}
For a 5-dimensional Gaussian distribution with a zero mean vector and covariance matrix $\lambda \cdot I_{n}$, scores should reflect uniform utilization of space for any $\lambda > 0$. Figure~\ref{fig:first_3_tests} shows that IsoScore and the intrinsic dimensionality score are the only metrics that are agnostic to scalar multiplication to the covariance matrix and return a score 1 for each value of $\lambda$. In Step 4 of IsoScore, we normalize the diagonal of the covariance matrix to have the same norm as the diagonal of the identity matrix, which ensures IsoScore is invariant to scalar changes in covariance.

\begin{table}[h]
\begin{center}
\caption{Performance of current methods on Test 4: Rotation Invariance}
\label{table:rotation_invariance}
\resizebox{\columnwidth}{1cm}{
\begin{tabular}{l|ccccc}
 \hline
 \hline 
   &  \textbf{\emph{IsoScore}} & \textbf{AvgCosSim}  &  \textbf{Partition}   & \textbf{ID Score} & \textbf{VarEx}  \\
 \hline 
    $X$   & 0.216 & 0.990 & 0.445 & 1.000 & 0.500 \\
    $X^{120^\circ}$ & 0.216  & 0.968 & 0.673  & 1.000 & 0.500\\
    $X^{240^\circ}$ & 0.216  & 0.981 &  0.669  & 1.000 & 0.500\\
    $X^{\text{PCA}}$  & 0.216  & 0.993 & 0.446   & 1.000 & 0.500\\
\hline
 \hline 
\end{tabular}}
\end{center}
\end{table}

\begin{figure*}[h!]
    \centering
    \includegraphics[width=13cm]{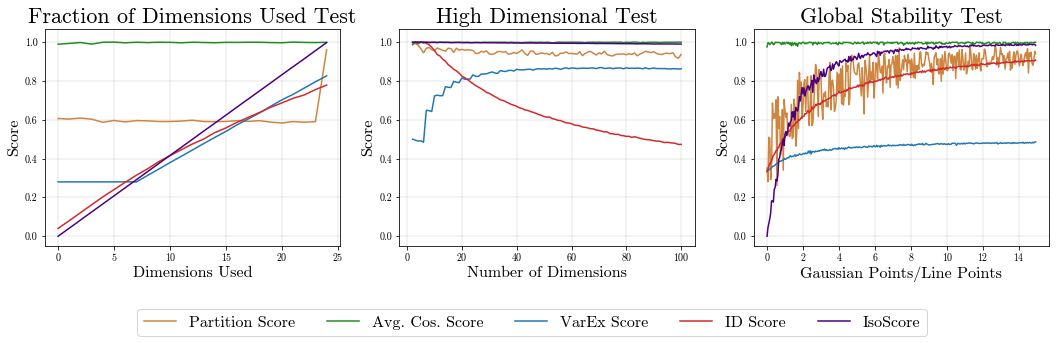}
    \caption{Left and center: Scores for the two Dimensions Used tests. Right: Scores for the ``skewered meatball'' test in 3 dimensions.}
    \label{fig:num_exp2}
\end{figure*}

\textbf{Test 3: Maximum Variance.} An effective score should monotonically decrease to $0$ as we increase the difference between the maximum variance value and average variance. Steps 4 and 5 of IsoScore ensure that the less equitably the mass in the covariance vector is distributed, the greater the isotropy defect will be. Figure~\ref{fig:2d_max_var} visualizes this phenomenon for a 2 Dimensional Gaussian. The ID Score fails this test since the intrinsic dimensionality estimate is 2.0 for all point clouds depicted in Figure~\ref{fig:2d_max_var}.

\textbf{Test 4: Rotation Invariance.} We rotate our baseline point cloud $X$ by $120^{\circ}$ and $240^{\circ}$. Lastly, we project $X$ using PCA reorientation while retaining dimensionality to obtain a point cloud $X^{\text{PCA}}$. We record results in Table~\ref{table:rotation_invariance}.
Only IsoScore, ID Score, and VarEx Score return constant values. The partition score would return a constant value if it were feasible to compute the true optimization problem. The approximate version of the partition score, however, depends too strongly on the basis. IsoScore is rotation invariant by design. In Step 2, IsoScore projects the point cloud of data in the directions of maximum variance before computing the covariance matrix of the data. 

 \textbf{Test 5: Dimensions Used (Fraction of Dimensions Used Test).}
The number of dimensions used in a point cloud $X \subset \mathbb{R}^{n}$ provides a sense of how uniformly $X$ utilizes the ambient space. A reliable metric should return scores near $0.0, 0.5,$ and $1.0$ when number of dimensions used is $1, \lfloor n/2 \rfloor,$ and $n$, respectively. Figure \ref{fig:num_exp2} shows that only IsoScore models ideal behavior for the dimensions used test. A rigorous explanation of why  IsoScore reflects the percentage of 1s present in the diagonal of the covariance matrix is provided in Heuristic~\ref{heuristic2}. Although the intrinsic dimensionality score monotonically increases as we increase $k$, it fails to reach $1$ when all dimensions are uniformly utilized. Average cosine similarity fails this test, as it stays constant near $1$ regardless of the fraction of dimensions uniformly utilized.

\textbf{Test 5: Dimensions Used (High Dimensional Test).}
Metrics of spatial utilization should allow for easy comparison between different vector spaces even when the dimensionality of the two spaces is different. Figure~\ref{fig:num_exp2} illustrates that IsoScore, the average cosine similarity score, and the partition score pass this test, as they stay constant near $1$. Note that the line for IsoScore decreases slightly. By the law of large numbers, the more data points we sample from the Gaussian distribution, the closer the covariance matrix will be to the covariance matrix from which it was sampled. The VarEx Score is not stable under an increase in dimension primarily because it requires the user to specify the percentage of principal components used in calculating the score. Note that the ID Score begins to decrease simply by increasing the dimensionality of the space since the MLE method is not very well suited for estimating the intrinsic dimension of isotropic Gaussian balls.

\textbf{Test 6: Global Stability.}
 To evaluate which scores are not skewed by highly concentrated subspaces, we design the ``skewered meatball test'' (see Figure~\ref{fig:meatball} for a geometric rendering).
As we increase the ratio between the number of points sampled from a 3D isotropic Gaussian and a 1D anisotropic line, we should see isotropy scores increase from $0$ to $1$, and hit $0.5$ precisely when the number of points sampled from the Gaussian distribution and the line are equal. 
Results from the skewered meatball test in Figure~\ref{fig:num_exp2} indicate that the partition score, IsoScore and intrinsic dimensionality estimation are the only metrics that are global estimators of the data.

\subsection{Isotropy in Contextualized Embeddings} 
\label{subsec:context_embeddings}

Recent literature suggests that contextualized word embeddings are anisotropic. However, as demonstrated in Subsection~\ref{subsec:testing_axioms}, no existing methods truly measure isotropy. We replicate experiments by \citep{cai_isotropy_context}, and present isotropy scores for the vector space of token embeddings generated from the WikiText-2 corpus for GPT (110M parameters) and GPT2 (117M parameters) in Figure~\ref{fig:GPT_scores}, as well as the scores for BERT (base, uncased) and DistilBERT (base, uncased) in Figure~\ref{fig:BERT_scores}.

\begin{figure}[h!]
\begin{center}
    \resizebox{\columnwidth}{!}{\includegraphics[width=10.5cm]{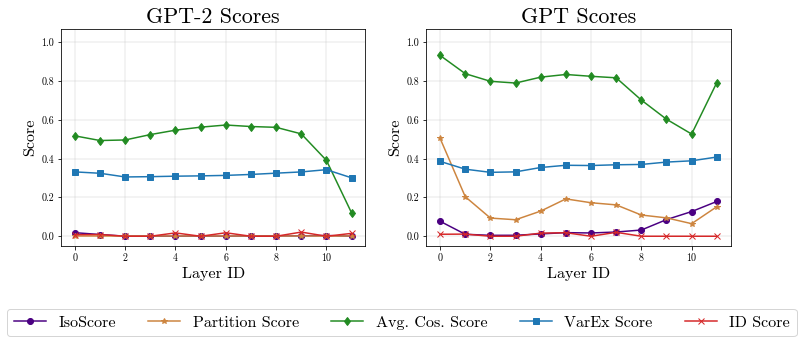}}
\caption{The 5 scores for each of the 12 layers of GPT-2 and GPT}
\label{fig:GPT_scores}
\end{center}
\end{figure}

\begin{figure}[h!]
\begin{center}
    \resizebox{\columnwidth}{!}{\includegraphics[width=10.5cm]{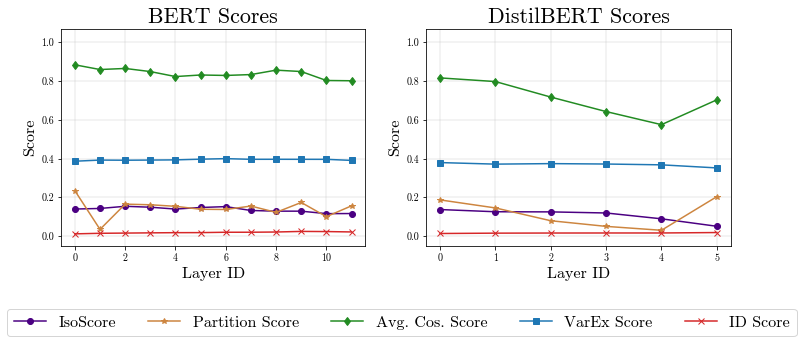}}
\caption{The 5 scores for the 12 layers of BERT, and the 6 layers of DistilBERT}
\label{fig:BERT_scores}
\end{center}
\end{figure}

Our findings using IsoScore challenge and extend upon the literature in the following ways. Contextualized embedding models  (i) utilize even fewer dimensions than previously thought; (ii) do not utilize fewer dimensions in deeper layers; and (iii) in agreement with \citet{naacl_isotropy}, contextualized embedding models do not necessarily occupy a ``narrow cone'' in space. 

IsoScore returns values of less than 0.18 for every considered contextualized embedding model. GPT and GPT-2 embeddings do not even isotropically utilize a single dimension in space, in the sense of Heuristic~\ref{heuristic}. Using average random cosine similarity, Cai et al.\ concluded that earlier layers in contextualized embedding models are more isotropic than layers deeper in the network. While this may appear to be true using inaccurate measures of isotropy, there is no significant decrease in IsoScore between the earlier and later layers of contextualized embedding models. \citet{naacl_isotropy} argue that isotropy improves performance for contextualized embedding models and that enforcing zero mean embeddings recovers ``isotropy''. The author's claim to improve isotropy by subtracting the mean vector from the point clouds of embeddings produced from BERT, GPT-2 and RoBERTa, however, the authors use the partition score in attempts to measure isotropy which will return values close to 1 when the data is zero-mean. As demonstrated throughout the paper, isotropy is strictly a property of the covariance matrix of a distribution and is by definition mean-agnostic. 

Note that our average random cosine similarity score finds contextualized embedding models to be much more isotropic then previously reported. When computing the average random cosine similarity score for contextualized word embeddings we sample 250,000 pairs of points. Prior studies such as \citet{bert_context} and \citet{cai_isotropy_context} sample as few as 1000 pairs of points when calculating average random cosine similarity. In both cases, the point clouds contain millions of tokens embedded into 768 dimensional vector space and differences in reported scores are likely due to sampling noise. We found empirically that the quantity of points sampled should be orders of magnitude larger than the dimension. 

The notion of isotropy is often conflated with geometry. The geometry of isotropic vector spaces, however, will differ depending on the distribution that generates the points in space. For example, multivariate isotropic Gaussians form $n$-dimensional balls and uniform distributions form $n$-dimensional cubes, yet both distributions receive an IsoScore of $1$. 
For an illustrated example of points generated from different isotropic distributions, consult Appendix~\ref{appendix:geometry_of_isotropy}. It is therefore not necessarily the case that even highly anisotropic embedding spaces form narrow, anisotropic cones.

\section{Conclusion \& Future Works}\label{sec:conclusion}
Several studies have attempted to study isotropy in contextualized embedding models. Using mathematically rigorous tests, we demonstrate that current methods do not accurately measure isotropy. This paper
presents IsoScore: a novel method for measuring isotropy that corrects the current misunderstandings in the literature. IsoScore is the only tool that is mean agnostic, robust to scalar changes to the covariance matrix and rotation invariant. Furthermore, IsoScore scales linearly with the number dimensions used and is stable when distributions contain highly isotropic subspaces. 
Future studies should avoid using existing methods to measure isotropy as resulting conclusions will be misleading or altogether inaccurate. 

There are several promising directions for future work. Current studies have used inaccurate methods to claim that increasing isotropy improves the performance of contextualized embedding models. However, we believe that further decreasing isotropy could improve performance, especially in language modeling applications. IsoScore could be used as a regularizer when fine tuning word embeddings to penalize distributions that exhibit isotropy. 

As point clouds of data arise in nearly all deep learning applications, IsoScore presents itself
as a useful tool to study and refine a variety of models beyond the domain of NLP.

\section*{Acknowledgements}

This research is supported in part by the NSF (IIS-1956221) and NIH (T32 GM128596). The views and conclusions contained herein are those of the authors and should not be interpreted as necessarily representing the official policies, either expressed or implied, of NSF, NIH, or the U.S.\ Government.

\bibliography{anthology,custom}

\begin{thebibliography}{24}
\expandafter\ifx\csname natexlab\endcsname\relax\def\natexlab#1{#1}\fi

\bibitem[{Arora et~al.(2015)Arora, Li, Liang, Ma, and Risteski}]{arora}
Sanjeev Arora, Yuanzhi Li, Yingyu Liang, Tengyu Ma, and Andrej Risteski. 2015.
\newblock \href {http://arxiv.org/abs/1502.03520} {Random walks on context
  spaces: Towards an explanation of the mysteries of semantic word embeddings}.
\newblock \emph{CoRR}, abs/1502.03520.

\bibitem[{Bi{\'s} et~al.(2021)Bi{\'s}, Podkorytov, and Liu}]{naacl_isotropy}
Daniel Bi{\'s}, Maksim Podkorytov, and Xiuwen Liu. 2021.
\newblock \href {https://doi.org/10.18653/v1/2021.naacl-main.403} {Too much in
  common: Shifting of embeddings in transformer language models and its
  implications}.
\newblock In \emph{Proceedings of the 2021 Conference of the North American
  Chapter of the Association for Computational Linguistics: Human Language
  Technologies}, pages 5117--5130, Online. Association for Computational
  Linguistics.

\bibitem[{Cai et~al.(2021)Cai, Huang, Bian, and Church}]{cai_isotropy_context}
Xingyu Cai, Jiaji Huang, Yuchen Bian, and Kenneth Church. 2021.
\newblock \href {https://openreview.net/forum?id=xYGNO86OWDH} {Isotropy in the
  contextual embedding space: Clusters and manifolds}.
\newblock In \emph{International Conference on Learning Representations}.

\bibitem[{Campadelli et~al.(2015)Campadelli, Casiraghi, Ceruti, and
  Rozza}]{id-survey}
Paola Campadelli, Elena Casiraghi, Claudio Ceruti, and Alessandro Rozza. 2015.
\newblock \href {https://doi.org/10.1155/2015/759567} {Intrinsic dimension
  estimation: Relevant techniques and a benchmark framework}.
\newblock \emph{Mathematical Problems in Engineering}, 2015:1--21.

\bibitem[{Coenen et~al.(2019{\natexlab{a}})Coenen, Reif, Yuan, Kim, Pearce,
  Vi{\'e}gas, and Wattenberg}]{Coenen2019VisualizingAM}
Andy Coenen, Emily Reif, Ann Yuan, Been Kim, Adam Pearce, F.~Vi{\'e}gas, and
  M.~Wattenberg. 2019{\natexlab{a}}.
\newblock Visualizing and measuring the geometry of bert.
\newblock In \emph{NeurIPS}.

\bibitem[{Coenen et~al.(2019{\natexlab{b}})Coenen, Reif, Yuan, Kim, Pearce,
  Viégas, and Wattenberg}]{coenen_bert_geometry}
Andy Coenen, Emily Reif, Ann Yuan, Been Kim, Adam Pearce, Fernanda Viégas, and
  Martin Wattenberg. 2019{\natexlab{b}}.
\newblock \href {http://arxiv.org/abs/1906.02715} {Visualizing and measuring
  the geometry of bert}.

\bibitem[{Devlin et~al.(2018)Devlin, Chang, Lee, and Toutanova}]{bert}
Jacob Devlin, Ming{-}Wei Chang, Kenton Lee, and Kristina Toutanova. 2018.
\newblock \href {http://arxiv.org/abs/1810.04805} {{BERT:} pre-training of deep
  bidirectional transformers for language understanding}.
\newblock \emph{CoRR}, abs/1810.04805.

\bibitem[{Ethayarajh(2019)}]{bert_context}
Kawin Ethayarajh. 2019.
\newblock \href {http://arxiv.org/abs/1909.00512} {How contextual are
  contextualized word representations? comparing the geometry of bert, elmo,
  and {GPT-2} embeddings}.
\newblock \emph{CoRR}, abs/1909.00512.

\bibitem[{Gao et~al.(2019)Gao, He, Tan, Qin, Wang, and
  Liu}]{gao2019representation}
Jun Gao, Di~He, Xu~Tan, Tao Qin, Liwei Wang, and Tie-Yan Liu. 2019.
\newblock \href {http://arxiv.org/abs/1907.12009} {Representation degeneration
  problem in training natural language generation models}.

\bibitem[{Gong et~al.(2018)Gong, He, Tan, Qin, Wang, and Liu}]{Gong2018FRAGEFW}
Chengyue Gong, Di~He, Xu~Tan, Tao Qin, Liwei Wang, and Tie-Yan Liu. 2018.
\newblock Frage: Frequency-agnostic word representation.
\newblock \emph{ArXiv}, abs/1809.06858.

\bibitem[{Hasan and Curry(2017)}]{Hasan2017WordRV}
S.~Hasan and E.~Curry. 2017.
\newblock Word re-embedding via manifold dimensionality retention.
\newblock In \emph{EMNLP}.

\bibitem[{Hewitt and Manning(2019)}]{Hewitt2019ASP}
John Hewitt and Christopher~D. Manning. 2019.
\newblock A structural probe for finding syntax in word representations.
\newblock In \emph{NAACL-HLT}.

\bibitem[{Levina and Bickel(2004)}]{mle-id}
Elizaveta Levina and Peter~J. Bickel. 2004.
\newblock Maximum likelihood estimation of intrinsic dimension.
\newblock In \emph{Proceedings of the 17th International Conference on Neural
  Information Processing Systems}, NIPS'04, page 777–784, Cambridge, MA, USA.
  MIT Press.

\bibitem[{Liang et~al.(2021)Liang, Cao, Zheng, Ren, and Gao}]{Liang}
Yuxin Liang, Rui Cao, Jie Zheng, Jie Ren, and Ling Gao. 2021.
\newblock \href {http://arxiv.org/abs/2104.05274} {Learning to remove: Towards
  isotropic pre-trained {BERT} embedding}.
\newblock \emph{CoRR}, abs/2104.05274.

\bibitem[{Merity et~al.(2016)Merity, Xiong, Bradbury, and Socher}]{wiki2}
Stephen Merity, Caiming Xiong, James Bradbury, and Richard Socher. 2016.
\newblock \href {http://arxiv.org/abs/1609.07843} {Pointer sentinel mixture
  models}.

\bibitem[{Mickus et~al.(2019)Mickus, Paperno, Constant, and van
  Deemter}]{bert_mean}
Timothee Mickus, Denis Paperno, Mathieu Constant, and Kees van Deemter. 2019.
\newblock \href {http://arxiv.org/abs/1911.05758} {What do you mean, bert?
  assessing {BERT} as a distributional semantics model}.
\newblock \emph{CoRR}, abs/1911.05758.

\bibitem[{Mu et~al.(2017)Mu, Bhat, and Viswanath}]{vec_postprocess_mu}
Jiaqi Mu, Suma Bhat, and Pramod Viswanath. 2017.
\newblock \href {http://arxiv.org/abs/1702.01417} {All-but-the-top: Simple and
  effective postprocessing for word representations}.
\newblock \emph{CoRR}, abs/1702.01417.

\bibitem[{Radford and Narasimhan(2018)}]{gpt}
Alec Radford and Karthik Narasimhan. 2018.
\newblock Improving language understanding by generative pre-training.

\bibitem[{Radford et~al.(2019)Radford, Wu, Child, Luan, Amodei, and
  Sutskever}]{gpt2}
Alec Radford, Jeff Wu, R.~Child, David Luan, Dario Amodei, and Ilya Sutskever.
  2019.
\newblock Language models are unsupervised multitask learners.

\bibitem[{Sanh et~al.(2019)Sanh, Debut, Chaumond, and Wolf}]{distil-bert}
Victor Sanh, Lysandre Debut, Julien Chaumond, and Thomas Wolf. 2019.
\newblock \href {http://arxiv.org/abs/1910.01108} {Distilbert, a distilled
  version of {BERT:} smaller, faster, cheaper and lighter}.
\newblock \emph{CoRR}, abs/1910.01108.

\bibitem[{Wang et~al.(2019)Wang, Gong, and Liu}]{Wang2019ImprovingNL}
Dilin Wang, Chengyue Gong, and Qiang Liu. 2019.
\newblock Improving neural language modeling via adversarial training.
\newblock \emph{ArXiv}, abs/1906.03805.

\bibitem[{Yonghe et~al.(2019)Yonghe, Lin, Yang, Diao, Shaowu, and
  Xiaochao}]{refine_ml_chu}
Chu Yonghe, Hongfei Lin, Liang Yang, Yufeng Diao, Zhang Shaowu, and Fan
  Xiaochao. 2019.
\newblock \href {https://doi.org/10.24963/ijcai.2019/749} {Refining word
  representations by manifold learning}.
\newblock pages 5394--5400.

\bibitem[{Zhou et~al.(2019)Zhou, Sedoc, and Rodu}]{Zhou2019GettingIS}
Tianyuan Zhou, Jo{\~a}o Sedoc, and J.~Rodu. 2019.
\newblock Getting in shape: Word embedding subspaces.
\newblock In \emph{IJCAI}.

\bibitem[{Zhou et~al.(2021)Zhou, Lin, and Ren}]{Zhou2021IsoBNFB}
Wenxuan Zhou, Bill~Yuchen Lin, and Xiang Ren. 2021.
\newblock Isobn: Fine-tuning bert with isotropic batch normalization.
\newblock In \emph{AAAI}.

\end{thebibliography}
\bibliographystyle{acl_natbib}

\appendix

\section{Interpretation: IsoScore as a Summary Statistic}
\label{sec:interpretation}

We will now provide an intuitive interpretation for the IsoScore of a point cloud $X\subseteq\mathbb R^n$.
The interested reader should consult Appendix \ref{app:interpretation} for an in-depth explanation of this heuristic. 

\begin{heuristic}\label{heuristic_body}
The IsoScore of $X$ is roughly the fraction of dimensions uniformly utilized by $X$.
\end{heuristic}
For example, an IsoScore near $0.5$ indicates that around half of the dimensions are utilized; and more generally, an IsoScore near $\alpha\in[0,1]$ indicates that approximately $n \cdot \alpha$ of the dimensions of $\mathbb R^n$ are uniformly utilized by $X$.
Table \ref{table:linearly_increase_dims} illustrates this trend where IsoScore increases linearly as more dimensions are uniformly utilized in $\mathbb R^9$. 

\section{Pre-existing metrics, in detail}
\label{app:existing_metrics}

\textbf{Average Cosine Similarity:} We define the \textit{Average Cosine Similarity Score} as $1$ minus the average cosine similarity of $N$ randomly sampled pairs of points from $X$. That is,
\begin{equation}\label{eq:avg_cos_sim}
\text{AvgCosSim}(X) := 1-\Bigg|\sum_{i=1}^N \frac{\text{cos}(x_{i},y_{i})}{N}\Bigg|,
\end{equation}
where $\{(x_1,y_1),\dots,(x_N,y_N)\}\subseteq X\times X$ are randomly chosen with $x_i\ne y_i$ for all $i$, and cos$(x_{i}, y_{i})$ denotes the cosine similarity of $x_{i}$ and $y_{i}$. Some authors define the average cosine similarity score to be exactly the average, rather than one minus the average. However, for ease of comparison to other metrics, our score ensures that $\text{AvgCosSim}(X)$ is between $0$ and $1$. Under our convention, it is commonly believed that a score of $0$ indicates that the point cloud $X$ is anisotropic and a score of $1$ indicates that $X$ is isotropic. In Section~\ref{sec:experiments}, we demonstrate that this is not the case.

\noindent \textbf{Partition Isotropy Score:} For any unit vector $c\in\mathbb R^n$, let the partition function be denoted as $Z(c) := \sum_{x \in X} \text{exp}(c^{\text{T}} x)$.
\citet{vec_postprocess_mu} measure isotropy as $I(X) := (\text{min}_{||c||=1} Z(c))/({\text{max}_{||c||=1}Z(c)})$. It is believed that a score closer to zero indicates an anisotropic space while a score closer to one indicates an isotropic space. \citet{vec_postprocess_mu} demonstrate that a score of $1$ implies that the eigenspectrum of $X$ is flat. Computing $I(X)$ explicitly is intractable since the set of unit vectors is infinite. 
Accordingly, \citet{vec_postprocess_mu} approximate $I(X)$ by 

    \begin{equation}\label{eq:partition_score}
        I(X) \approx \frac{\text{min}_{c \in C}Z(c)}{\text{max}_{c \in C}Z(c)}
    \end{equation}
    
\noindent where $C$ is the set of eigenvectors of $X^{\text{T}}X$. For the remainder of the paper we refer to (\ref{eq:partition_score}) as the \textit{Partition Score.}

\textbf{Intrinsic Dimensionality:} 
Given a point cloud $X\subseteq\mathbb R^n$, it is sometimes useful to assume that $X$ is sampled from a manifold of dimension less than $n$. For example, points in the left panel in Figure~\ref{fig:dim_used_pics} are sampled from a 1-dimensional space and points in the middle panel are sampled from a 2-dimensional space. Algorithms for intrinsic dimensionality aim to estimate the true dimension of a given manifold from which we assume a point cloud has been sampled.
Intrinsic dimensionality has been used to argue that word embedding models are anisotropic \citep{cai_isotropy_context}.  For a point cloud $X \subset \mathbb{R}^{n}$, it is commonly thought that the more isotropic $X$ is, the closer the intrinsic dimensionality of $X$ is to $n$. Dividing the intrinsic dimensionality of $X$ by $n$ provides us with a normalized score of isotropy, which we refer to as the \textit{ID Score}. We use the maximum likelihood estimation (MLE) method to calculate intrinsic dimensionality. For a detailed description of the MLE method for intrinsic dimensionality estimation please consult \citep{mle-id, id-survey}. 

\textbf{Variance Explained Ratio:} The variance explained ratio measures how much total variance is explained by the first $k$ principal components of the data. Note that when all principal components are considered, the variance explained ratio is equal to $1$. Examining the eigenspectrum of principal components is undoubtedly a useful tool in quantifying the spatial distribution of high dimensional data. However, the variance explained ratio requires us to specify \textit{a priori} the number of principal components we wish to examine. We divide the variance explained by the first $k$ principal components by $k/n$ to convert the variance explained ratio into a normalized score.

\section{Bounds on IsoScore}
\label{appendix:Bounds}

\begin{prop}
Let $X\subseteq\mathbb R^n$ be a finite set.
Then $\iota(X)\in [0,1]$.
\end{prop}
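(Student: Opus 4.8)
The goal is to show $\iota(X) \in [0,1]$, and the cleanest route is to chase the constraints through the seven steps of Algorithm~\ref{algo:isoscore} in order, keeping track of the interval each quantity lives in. First I would recall that the paper has already established $\delta(X) \in [0,1]$: indeed Step 5 argues via the triangle inequality that $\|\hat\Sigma_D - \mathbf 1\| \in [0, 2\sqrt n]$, and the normalization factor $\sqrt{2(n-\sqrt n)}$ was chosen precisely so that the \emph{largest attainable} value of the numerator — attained (as claimed, and proved in Appendix~\ref{appendix:Bounds}) by the degenerate variance vector $(\sqrt n, 0, \dots, 0)$ — maps to $1$. So the one genuine input I need is the claim, already asserted in the text, that no valid covariance matrix exceeds this extreme configuration; granting that, $\delta(X) \in [0,1]$ for free.

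Given $\delta := \delta(X) \in [0,1]$, the rest is monotonicity bookkeeping. Set $t := \delta^2 \in [0,1]$ and consider $g(t) := n - t(n-\sqrt n)$. Since $n - \sqrt n \ge 0$ for $n \ge 1$, $g$ is decreasing in $t$, so $g(t)$ ranges over $[g(1), g(0)] = [\sqrt n, \, n]$. Then $k(X) = g(t)^2/n$ ranges over $[(\sqrt n)^2/n, \, n^2/n] = [1, n]$, which is exactly the claim that $k(X) \in [1,n]$ made in Step 6; hence $\phi(X) = k(X)/n \in [1/n, 1]$. Finally Step 7 applies the affine map $S(x) = (nx-1)/(n-1)$, which is increasing (for $n \ge 2$) and sends $1/n \mapsto 0$, $1 \mapsto 1$; therefore $\iota(X) = S(\phi(X)) \in [0,1]$. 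Plugging $g(t)^2/n$ for $\phi(X)$ recovers the closed form in~\eqref{eq:IsoScore_formula}, confirming consistency. One should note the edge-case hypothesis from Subsection~\ref{subsec:def_isotropy} that $n \ge 2$, which is what makes the final affine rescaling well-defined and order-preserving; the $n=1$ case is excluded by assumption.

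The main obstacle is the single step I proposed to take on faith: that among all diagonals $\Sigma_D$ of \emph{valid} (positive-semidefinite) covariance matrices, after length-normalization to have norm $\sqrt n$, the quantity $\|\hat\Sigma_D - \mathbf 1\|$ is maximized by the spike $(\sqrt n, 0, \dots, 0)$. Equivalently, subject to $\hat\Sigma_D$ lying on the sphere of radius $\sqrt n$ in the nonnegative orthant $\mathbb R_{\ge 0}^n$, one maximizes the distance to $\mathbf 1$, i.e. minimizes $\langle \hat\Sigma_D, \mathbf 1\rangle = \sum_i (\hat\Sigma_D)_i$. Minimizing a linear functional with nonnegative coefficients over a sphere intersected with the nonnegative orthant pushes mass onto a single coordinate, giving the claimed extremizer; that argument is presumably the content of the later proof in Appendix~\ref{appendix:Bounds}, and here I would simply cite it. Everything else — the two monotonicity observations and the interval arithmetic — is routine and I would present it compactly rather than belaboring it.
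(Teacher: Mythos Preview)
Your approach and the paper's are essentially the same reduction, but there is a structural circularity in how you set it up: Appendix~\ref{appendix:Bounds} \emph{is} this proposition---it contains no separate lemma you can invoke. The paper does not prove ``$\delta(X)\in[0,1]$'' somewhere else and then deduce ``$\iota(X)\in[0,1]$''; its proof simply notes (in one line) that the two statements are equivalent and then establishes $\|\hat\Sigma_D-\mathbf 1\|\le\sqrt{2(n-\sqrt n)}$ directly. So the step you ``take on faith'' and propose to cite is exactly the content you are being asked to supply.

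That said, your sketch of that step is correct and coincides with the paper's argument. Expanding $\|\hat\Sigma_D-\mathbf 1\|^2=2n-2\sum_i(\hat\Sigma_D)_i$ (using $\|\hat\Sigma_D\|^2=\|\mathbf 1\|^2=n$) reduces the claim to $\sum_i(\hat\Sigma_D)_i\ge\sqrt n$ on the nonnegative orthant intersected with the sphere of radius $\sqrt n$; the paper finishes with the elementary bound $\sum a_i^2\le(\sum a_i)^2$ for nonnegative reals, giving $\sum a_i\ge\sqrt{\sum a_i^2}=\sqrt n$. Your ``push mass onto a single coordinate'' phrasing identifies the same extremizer. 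The monotonicity chain you append from $\delta$ to $\phi$ to $\iota$ is fine but is just an unwinding of the equivalence the paper asserts in a single sentence; you could compress it accordingly and instead spend the ink on the inequality itself, which is the only substantive content.
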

\begin{proof}
Define $\Sigma$ to be the $n\times n$ sample covariance matrix of $X^{\mathrm{PCA}}$.
Let $c>0$ be so that if we define $\hat\Sigma:=c\cdot\Sigma$, then $\|\hat\Sigma_D\|=\sqrt n$.
Let us enumerate the entries of this vector as $\hat\Sigma_D=(\mathrm{Var}(x_1),\dots,\mathrm{Var}(x_n))$.
In order to show that $\iota(X)\in[0,1]$, it is equivalent to show that $\|\hat\Sigma_D-\mathbf 1\|\in[0,\sqrt{2(n-\sqrt n)}]$, and by definition of the Euclidian norm, the latter estimate is equivalent to
\begin{equation}\label{eq:Iso_Score_bound_WTS}
    2(n-\sqrt n)\ \ge \ \sum_{i=1}^n(\mathrm{Var}(x_i)-1)^2.
\end{equation}
But the identity $\|\hat\Sigma_D\|=\sqrt n$ implies that $\sum_{i=1}^n\mathrm{Var}(x_i)^2=n$, so in fact (\ref{eq:Iso_Score_bound_WTS}) is equivalent to
\begin{align*}
 \sum_{i=1}^n \mathrm{Var}(x_i)\ge\sqrt{n}.
\end{align*}
If this inequality were flipped, then we could estimate that
\begin{align*}
    n
&=\mathrm{Var}(x_1)^2+\cdots +\mathrm{Var}(x_n)^2\\
&\le (\mathrm{Var}(x_1)+\cdots+\mathrm{Var}(x_n))^2\\
&<n,
\end{align*}
which is a contradiction.
\end{proof}

\section{Interpretation of IsoScore, in Detail}
\label{app:interpretation}

This appendix provides rigorous mathematical justification for the claims that we made in Appendix \ref{sec:interpretation} about the interpretation of IsoScore.
It is split into two parts.
In Appendix~\ref{app:frac_of_dims} we formalize, and prove, the claim that the IsoScore for a point cloud $X$ is approximately the fraction of dimensions uniformly utilized by $X$.
And in Appendix~\ref{app:k_n_dim} we argue that IsoScore is an honest indicator of uniform spatial utilization.


\subsection{IsoScore Reflects the Fraction of Dimensions Uniformly Utilized}\label{app:frac_of_dims}

In Section~\ref{sec:interpretation} we provided an interpretation for the value of the IsoScore $\iota(X)$ in Heuristic \ref{heuristic_body}.
Intuitively, our heuristic says that $\iota(X)$ is roughly the fraction of dimensions of $\mathbb R^n$ utilized by $X$.
We will now explain and justify this heuristic in detail.
We formalize our heuristic below.

\begin{heuristic}\label{heuristic}
Suppose that a point cloud $X\subseteq\mathbb R^n$ gives an IsoScore $\iota(X)$.
Then $X$ occupies approximately 
\begin{equation}\label{eq:IsoScore_to_dimensions_used}
     k(X) := 
     \iota(X)\cdot n+1-\iota(X)
\end{equation}
dimensions of $\mathbb R^n$.
\end{heuristic}

Note in particular that $\iota(X)=0$ implies that~(\ref{eq:IsoScore_to_dimensions_used}) simplifies to a single dimension utilized and $\iota(X)=1$ implies that~(\ref{eq:IsoScore_to_dimensions_used}) simplifies to all $n$ dimensions utilized.

In the remainder of this subsection, we will justify the above heuristic.
We will make reference to the notations and equations in Section \ref{sec:formal_defn}.
Fix $n\ge 1$ and $k\in\{1,\dots,n\}$, and consider the matrix $I_n^{(k)}$.
Recall that $I_n^{(k)}$ is the covariance matrix for a $k$-dimensional uncorrelated Gaussian distribution in $\mathbb R^n$.
For example, spaces sampled using the matrices $I_3^{(k)}$, for $k=1,2,3$ are rendered in Figure~\ref{fig:dim_used_pics} as a line, a circle, and a ball, respectively.
One can compute directly that the IsoScores for these three spaces are 
\[\iota(I_3^{(1)})\approx 0.0,\qquad
\iota(I_3^{(2)})\approx 0.5,\qquad
\iota(I_3^{(3)})\approx 1.0.
\]
Our main insight in this section is that it is worthwhile to apply these statistics for reverse reasoning in the following sense: suppose you have some point cloud $X\subseteq\mathbb R^3$ which satisfies $\iota(X)\approx 1/2$.
Then this IsoScore should allow you to infer that $X$ uniformly occupies approximately $2$ dimensions of $\mathbb R^3$.

In Heuristic~\ref{heuristic}, we provide the closed formula (\ref{eq:IsoScore_to_dimensions_used}) for generalizing the above reasoning to all dimensions $n$.
We will now prove this formula.

\begin{proof}[Proof of Heuristic~\ref{heuristic}]
Once we normalize $I_n^{(k)}$ so that its Euclidean norm is $\sqrt n$, we get that the first $k$ diagonal entries are $\sqrt{n/k}$. Therefore, the isotropy defect is
\begin{align}\label{eq:iso_defect_Ink}
\delta(I_n^{(k)})
&=\frac{\|\hat I_n^{(k)}-\mathbf 1\|}{\sqrt{2(n-\sqrt n)}}\\
&=\frac{\sqrt{k(1-\sqrt{n/k})^2+n-k}}{\sqrt{2(n-\sqrt n)}}\\
&=\frac{\sqrt{n-\sqrt{nk}}}{\sqrt{n-\sqrt n}}.\nonumber
\end{align}
It is natural to consider the map $k\mapsto\delta(I_n^{(k)})$.
A priori, this is a discrete function defined on $\{1,\dots,n\}$; a fortiori, this is in fact a continuous, monotonically decreasing bijection on the connected interval $[1,n]$.
Therefore, the function defined by
\[\tilde\delta_n:[1,n]\to[0,1]:k\mapsto\delta(I_n^{(k)})\] 
is invertible, and one can compute that its inverse is
\[\tilde\delta_n^{-1}:[0,1]\to[1,n]:d\mapsto\frac{(n-d^2(n-\sqrt n))^2}{n}.\]


The truth of this heuristic rests upon the validity of the following assumption, which is reasonable to use in many contexts.

\begin{assumptionUnderpinningHeuristic}
The isotropy defect corresponding to a point cloud sampled using the covariance matrix $I_n^{(k)}$ is the prototypical isotropy defect for any point cloud in $\mathbb R^n$ which uniformly utilizes $k$ dimensions.
\end{assumptionUnderpinningHeuristic}

We will now invoke this assumption.
Let $\delta(X)$ be the isotropy defect for an arbitrary point cloud $X$.
If we assume that we are in the nontrivial case where $\delta(X)>0$, then $\tilde\delta_n^{-1}(\delta(X))$ is in the interval $[1,n)$.
Because $\tilde\delta_n^{-1}$ is bijective, there exists a unique $k\in\{1,\dots,n-1\}$ with the property that $\tilde\delta_n^{-1}(\delta(X))\in[k,k+1).$
But by construction, $[k,k+1)=[\tilde\delta_n^{-1}(\delta(I_n^{(k)})),\tilde\delta_n^{-1}(\delta(I_n^{(k+1)})))$.
By monotonicity of $\tilde\delta_n^{-1}$, this implies that \[\delta(X)\in[\delta(I_n^{(k)}),\delta(I_n^{(k+1)})).\]
Therefore, by the assumption underpinning the heuristic, we can deduce that $X$ is uniformly utilizing between $k$ and $k+1$ dimensions of $\mathbb R^n$.
To be specific, we say that $X$ is uniformly utilizing $\tilde\delta_n^{-1}(\delta(X))\in[k,k+1)$ dimensions.
Recalling Section~\ref{sec:formal_defn}, we can recognize that 
    in Step 6, the formula for $k(X)$, the quantity of dimensions uniformly utilized by $X$, is precisely $k(X):=\tilde\delta_n^{-1}(\delta(X))$; 
    likewise, the formula for $\phi(X)$, the fraction of dimensions uniformly utilized by $X$, is $\phi(X):=\tilde\delta_n^{-1}(\delta(X))/n$.

Now we are in a position to verify Equation~\ref{eq:IsoScore_to_dimensions_used}, the main claim of Heuristic~\ref{heuristic}.
By the assumption underpinning the heuristic, it is sufficient to verify Equation~\ref{eq:IsoScore_to_dimensions_used} in the case of $I_n^{(k)}$, for $k=1,\dots,n$.
This is because all functions that we will utilize are monotonic bijections.
Using the notation in Steps 6 and 7 in Section~\ref{sec:formal_defn}, we can compute that
\begin{align*}
\iota(I_n^{(k)})(n-1)+1
&=S(\phi_n(I_n^{(k)}))(n-1)+1\\
&=n\cdot\phi_n(I_n^{(k)})\\
&=k(I_n^{(k)}).
\end{align*}
Using the formula $k(X)=(n-\delta(X)^2(n-\sqrt n))^2/n$, we can continue:
\begin{align*}
k(I_n^{(k)})
&=\frac{(n-\delta(I_n^{(k)})^2(n-\sqrt n))^2}{n}\\
&=\frac{(n-\frac{n-\sqrt{nk}}{n-\sqrt n}(n-\sqrt n))^2}{n}\\
&=k,
\end{align*}
where in the penultimate equality we used Equation~\ref{eq:iso_defect_Ink}.
This completes the proof.
\end{proof}

Because IsoScore covers a continuous spectrum, one should carefully interpret what we mean when we say that $X$ occupies approximately $k$ dimensions of $\mathbb R^n$.
For example, consider the 2D Gaussian distributions depicted in Figure~\ref{fig:2d_max_var}. 
Heuristic~\ref{heuristic} predicts $k=1.9996,1.6105,1.0281$ dimensions are used when $x=1,3,75$, respectively.
These should be interpreted as follows: ``when $x=75$, the points sampled are mostly using one direction of space'' and ``when $x=3$, the points sampled are using somewhere between one and two dimensions of space.''

Heuristic \ref{heuristic} suggests that an IsoScore near $1/2$ means that the corresponding point cloud $X$ occupies approximately half of the dimensions of its ambient space. 
We can make this reasoning rigorous as follows: for any $n\ge 2$, one can compute that  
\begin{equation}\label{eq:closed_formula}
 \iota(I_n^{(k)})
 =\frac{k-1}{n-1}
 \approx\frac{k}{n},\qquad\text{for any $k=1,\dots,n$.}   
\end{equation}
\begin{proof}[Proof of (\ref{eq:closed_formula})]
In Equation~\ref{eq:iso_defect_Ink} computed that the isotropy defect is $\delta(I_n^{(k)})=\sqrt{n-\sqrt{nk}}/\sqrt{n-\sqrt n}$. 
If we substitute this expression into (\ref{eq:IsoScore_formula}), then we obtain the formula $\iota(I_n^{(k)})=\frac{k-1}{n-1}.$
Furthermore, one can easily estimate that $|\frac{k-1}{n-1}-\frac{k}{n}|\le\frac{1}{n}$.
\end{proof}
Table \ref{table:linearly_increase_dims} illustrates this formula in the concrete case of $\mathbb R^9$.
This formula implies the following key relationship: \[\lim_{n\to\infty}\iota(I_n^{(\left\lfloor{n/2}\right\rfloor)})=1/2.\]
Generalizing this line of reasoning yields our second heuristic explanation for the meaning of IsoScore, Heuristic~\ref{heuristic2}.
We copy it here:

\begin{heuristic}\label{heuristic2_copy}
When the ambient space $\mathbb R^n$ has large dimension, the IsoScore $\iota(X)$ is approximately the fraction of dimensions uniformly utilized by $X$.
\end{heuristic}
\begin{proof}[Proof of Heuristic~\ref{heuristic2}]
By the assumption underpinning Heuristic~\ref{heuristic}, it suffices to show this in the case of matrices of the form $I_n^{(k)}$.
Fix $\alpha\in[0,1]$, and consider the covariance matrix $I_n^{(\lfloor \alpha n\rfloor)}$.
For large $n$, the fraction of dimensions uniformly utilized by $I_n^{(\lfloor \alpha n\rfloor)}$ is approximately $\alpha$, according to Definition~\ref{def:uniform_utilization}.
But by (\ref{eq:closed_formula}), we can compute that
\[\lim_{n\to\infty}\iota(I_n^{(\lfloor \alpha n \rfloor)})
=\lim_{n\to\infty}\frac{\lfloor \alpha n \rfloor-1}{n-1}
=\alpha.
\]
This completes the proof.
\end{proof}

\subsection{The IsoScore for \texorpdfstring{$I_n^{(k)}$}{Ink} Reflects Uniform Utilization of \texorpdfstring{$k$}{k} Dimensions}
\label{app:k_n_dim}

We will now investigate what range of IsoScores are obtained by sample covariance matrices that utilize $k$ out of $n$ dimensions.
It is easy to see that these scores at least fill the interval $(0,\iota(I_n^{(k)})]$, since the map
\begin{align*}
[1,&\infty)\to(0,\iota(I_n^{(k)})]\\
&x\mapsto\iota(\diag(x,1,\dots,1,0,\dots,0))
\end{align*}

is surjective.
Conversely, we can show that this interval is the only possible range of IsoScores corresponing to such covariance matrices.
We make this claim rigorous in the following proposition.

\begin{prop}\label{prop:argmax}
Fix $n\ge 2$.
For any $k=1,\dots,n$, we have that
\begin{align}\label{eq:argmax}
I_n^{(k)}=\mathrm{argmax}\big\{\iota(J):J\ &\mathrm{utilizes}\ k\ \mathrm{out}\\ &\mathrm{of}\ n\ \mathrm{dimensions}\big\}.\nonumber
\end{align}
\end{prop}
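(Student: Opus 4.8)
The plan is to reduce the optimization over all covariance matrices $J$ utilizing $k$ out of $n$ dimensions to an optimization over the diagonal variance vector, and then show that $I_n^{(k)}$ minimizes the isotropy defect $\delta$ among all such matrices — which, since $\iota$ is a monotonically decreasing function of $\delta$ (by the chain of transformations in Steps 6 and 7 of Section~\ref{sec:formal_defn}), is equivalent to the claim. First I would observe that $\iota(J)$ depends only on the normalized diagonal $\hat\Sigma_D$ of the PCA-reoriented covariance matrix, and that any $J$ utilizing $k$ out of $n$ dimensions has, after length normalization, a variance vector of the form $\hat\Sigma_D=(v_1,\dots,v_k,0,\dots,0)$ with $v_i>0$ and $\sum_{i=1}^k v_i^2=n$. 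So the problem becomes: minimize $\|\hat\Sigma_D-\mathbf 1\|^2=\sum_{i=1}^k (v_i-1)^2 + (n-k)$ subject to $\sum_{i=1}^k v_i^2=n$ and $v_i\ge 0$.

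Next I would carry out this constrained minimization. Since $\sum_{i=1}^k(v_i-1)^2 = \sum_{i=1}^k v_i^2 - 2\sum_{i=1}^k v_i + k = n - 2\sum_{i=1}^k v_i + k$ on the constraint surface, minimizing the defect is equivalent to \emph{maximizing} $\sum_{i=1}^k v_i$ subject to $\sum_{i=1}^k v_i^2 = n$ and $v_i \ge 0$. By the Cauchy–Schwarz inequality (or equivalently the power-mean inequality), $\sum_{i=1}^k v_i \le \sqrt{k}\cdot\sqrt{\sum_{i=1}^k v_i^2} = \sqrt{nk}$, with equality if and only if all the $v_i$ are equal, i.e.\ $v_i=\sqrt{n/k}$ for every $i$. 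But $(\sqrt{n/k},\dots,\sqrt{n/k},0,\dots,0)$ is exactly the length-normalized diagonal of $I_n^{(k)}$, as computed in the proof of Heuristic~\ref{heuristic} (Equation~\ref{eq:iso_defect_Ink}). Hence $I_n^{(k)}$ is the unique minimizer of the isotropy defect among matrices utilizing $k$ out of $n$ dimensions, and therefore the unique maximizer of $\iota$, establishing~(\ref{eq:argmax}).

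One small point to handle carefully: the definition of ``utilizes $k$ out of $n$ dimensions'' (Definition~\ref{def:uniform_utilization}) requires the first $k$ diagonal entries to be nonzero, so strictly speaking the feasible set is the \emph{open} region $v_i>0$ rather than the closed simplex-like set $v_i\ge 0$. Since the Cauchy–Schwarz maximizer $v_i=\sqrt{n/k}$ lies in the interior (all coordinates strictly positive), this causes no difficulty — the maximum is attained and the argmax is a genuine member of the feasible set. I would note that the supremum is not attained at the boundary (where some $v_i\to 0$), since there $\sum v_i$ is strictly smaller; this also recovers, as a byproduct, the fact noted just before the proposition that the IsoScores realized by such matrices fill precisely the half-open interval $(0,\iota(I_n^{(k)})]$.

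The main obstacle here is essentially bookkeeping rather than conceptual depth: one must be careful that the PCA-reorientation and length-normalization steps genuinely allow us to replace an arbitrary $J$ by its sorted, length-normalized diagonal without changing $\iota(J)$ — this is exactly the content of Steps 2–4 of the IsoScore algorithm, together with the rotation-invariance discussion — and that the monotonicity of $\iota$ in $\delta$ is strict, so that the unique defect-minimizer is the unique $\iota$-maximizer. Once those reductions are in place, the inequality is a one-line application of Cauchy–Schwarz with a transparent equality condition.
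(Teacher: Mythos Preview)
Your proposal is correct and follows essentially the same route as the paper: reduce to showing $\delta(J)\ge\delta(I_n^{(k)})$, expand $\|\hat\Sigma_D-\mathbf 1\|^2$ using the constraint $\sum v_i^2=n$ to reduce to the inequality $\sum_{i=1}^k v_i\le\sqrt{nk}$, and then apply a standard mean inequality. The only cosmetic difference is that you invoke Cauchy--Schwarz whereas the paper phrases the final step as Jensen's inequality with $f(x)=x^2$; these are the same inequality here, and your treatment of the equality case and the open-versus-closed feasible set is in fact slightly more careful than the paper's.
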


This result justifies the use of IsoScore for measuring the extent to which a point cloud optimally utilizes all dimensions of the ambient space because it demonstrates that $\iota(I_n^{(k)})$ is the maximal IsoScore for any covariance matrix with $k$ non-zero entries and $n-k$ zero entries.

\begin{proof}[Proof of Proposition~\ref{prop:argmax}]
In this section we let $\Diag^+(n)$ denote the set of $n\times n$ real matrices which vanish away from the diagonal and whose diagonal entries are all non-negative.
The set $\Diag^+(n)$ parameterizes the set of all $n\times n$ sample covariance matrices after performing PCA-reorientation.
We also let $\Diag^+(n,k)\subseteq\Diag^+(n)$ denote that subset whose first $k$ diagonal entries are non zero and whose last $n-k$ diagonal entries are zero.
The set $\Diag^+(n,k)$ parameterizes the set of sample covariance matrices post-PCA reorientation which utilize $k$ out of $n$ dimensions of space. 
Covariance matrices in $\Diag^+(n,k)$ represent point clouds with the property that $\mathrm{Var}(x_i)>0$ for $i=1,\dots,k$, and $\mathrm{Var}(x_i)=0$ for $i=k+1,\dots,n$.

It suffices to show that, for every $J\in\Diag^+(n,k)$, we have that $\iota(J)\le\iota(I_n^{(k)})$, or equivalently, $\delta(J)\ge\delta(I_n^{(k)})$.
Write $\hat I_{n,D}^{(k)}=(\sqrt{n/k},\dots,\sqrt{n/k},0,\dots,0)$ and $J_D=(a_1,\dots,a_k,0,\dots,0)$, where $a_1^2+\cdots a_k^2=n$.
Then we must show that $\|J_D-\mathbf 1\|\ge \|\hat I_{n,D}^{(k)}-\mathbf 1\|$, or equivalently,
\[\sum_{i=1}^k (a_i-1)^2+n-k\ge\sum_{i=1}^k(\sqrt{n/k}-1)^2+n-k.\]
This latter estimate is equivalent to
\[\sum_{i=1}^k a_i\le\sqrt{nk}.\]
By Jensen's inequality, applied with the convex function $f(x)=x^2$, we have that
\[f\left(\sum_{i=1}^k\frac{a_i}{k}\right)\le\sum_{i=1}^k\frac{f(a_i)}{k}.\]
Simplifying, this implies that $(a_1+\cdots+a_k)^2\le kn$.
This completes the proof.
\end{proof}

\section{Numerical Experiments}
\label{appendix:num_experiments}
In this section, we provide explicit details of how each test is designed. We provide code for all experiments at: \href{https://github.com/bcbi-edu/p\_eickhoff\_isoscore}{\textit{https://github.com/bcbi-edu/p\_eickhoff\_isoscore}}.

\begin{enumerate}
    \item \textbf{Test 1: Mean Invariance.} To assess whether the five scores are mean invariant, we start with $100,000$ points sampled from a 10-dimensional multivariate Gaussian distribution with covariance matrix equal to the identity and a common mean vector $M=[\mu,\mu,...,\mu]$. We compute scores for $\mu=0,1,2,...,20$. 
    
    \item \textbf{Test 2: Scalar Invariance.}  We test for the property of scalar invariance by sampling $100,000$ points from a 5D Gaussian distribution with common mean vector $M=[3,3,3,3,3]$ and covariance matrix equal to $\lambda \cdot I_5$. We then compute scores for each point cloud as we increase $\lambda$ from 1 to 25. 
    
    \item \textbf{Test 3: Maximum Variance.} We start by sampling $100,000$ points from a 10D multivariate Gaussian distribution with zero common mean vector and a diagonal covariance matrix with nine entries equal to $1$ and one diagonal entry equal to $x$. In our experimental setup, we compute all five scores as we increase $x$ from 1 to 75.
    
    \item \textbf{Test 4: Rotation Invariance.} Our baseline point cloud $X \subset \mathbb{R}^{n}$ consists of $100,000$ points sampled from a 2D zero-mean Gaussian distribution with a covariance matrix equal to $\begin{psmallmatrix} 1 & 0.8 \\ 0.8 & 1 \end{psmallmatrix}$. We rotate $X$ by $120^{\circ}$ and $240^{\circ}$. Lastly, we project $X$ using PCA reorientation while retaining dimensionality to obtain a point cloud $X^{\text{PCA}}$.

    \item \textbf{Test 5: Dimensions Used (Fraction of Dimensions Used Test).} For our first experiment, which we term the ``fraction of dimensions used test,'' we sample $100,000$ points from a 25D multivariate Gaussian distribution with a zero common mean vector and a diagonal covariance matrix where the first $k$ entries are $1$ and the remaining $n-k$ diagonal elements are $0$. We refer to $k$ as the number of dimensions uniformly used by our data (see Definition~\ref{def:uniform_utilization}). For our experiment we let $k=1,2,3,...,25$, and compute the corresponding scores. 
    
    \item \textbf{Test 5: Dimensions Used (High Dimensional Test).} A good score of spatial utilization should allow for easy comparison between different vector spaces even when the dimensionality of the two spaces is different. We sample $100,000$ points from a zero-mean Gaussian distribution with identity covariance matrix $I_n$ and increase the dimension of the distribution from $n=2,\dots,100$.
    
    \item \textbf{Test 6: Global Stability.} We generate a ``skewered meatball'' by sampling $1,000$  points from a line in 3D space and increase the number of points sampled from a 3-Dimensional, zero-mean, isotropic Gaussian from 0 to $150,000$. 
\end{enumerate}

\section{Geometry of Isotropy}
\label{appendix:geometry_of_isotropy}
\begin{figure}[h!]
    \centering
    \includegraphics[width=3cm]{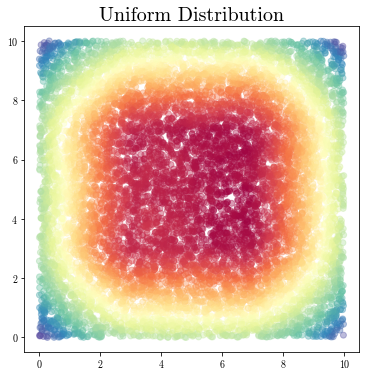}
    \includegraphics[width=3cm]{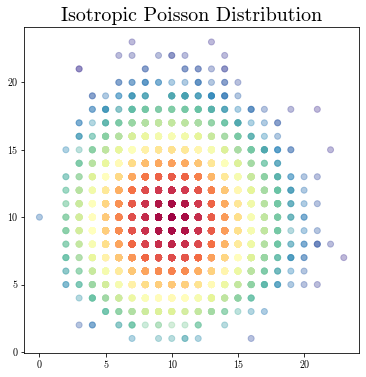}
    \includegraphics[width=3cm]{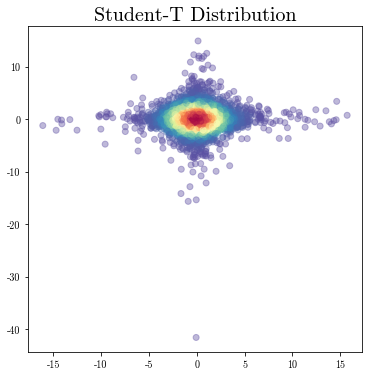}
    \includegraphics[width=3cm]{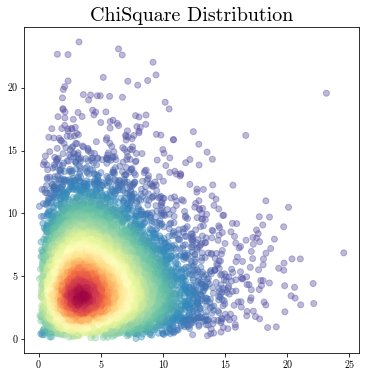}
    \caption{Points sampled from a Uniform distribution, Poisson distribution, Student-T distribution and ChiSquare distribution respectively}
    \label{fig:iso_geo}
\end{figure}

 Each of the distributions illustrated in Figure~\ref{fig:iso_geo} has a covariance matrix proportional to the identity and is therefore maximally isotropic. Namely, the variance is distributed equally in all directions. Despite receiving an IsoScore of 1, the geometry of the point clouds are vastly different. We can only comment on the geometry of the point cloud if the underlying distribution of the space is known.

\end{document}